\definecolor{promptbg}{RGB}{245,245,245} 
\theoremstyle{plain}
\newtheorem{theorem}{Theorem}[section]
\newtheorem{proposition}[theorem]{Proposition}
\newtheorem{lemma}[theorem]{Lemma}
\theoremstyle{definition}
\newtheorem{definition}{Definition}
\newtheorem{assumption}{Assumption}
\theoremstyle{remark}
\icmltitlerunning{LLM Personas as a Substitute for Field Experiments in Method Benchmarking}
\begin{document}

\twocolumn[
  \icmltitle{LLM Personas as a Substitute for Field Experiments in Method Benchmarking}



  \icmlsetsymbol{equal}{*}

  \begin{icmlauthorlist}
    \icmlauthor{Enoch Hyunwook Kang}{yyy}
  \end{icmlauthorlist}

  \icmlaffiliation{yyy}{Foster School of Business, University of Washington, Seattle, USA}

  \icmlcorrespondingauthor{Enoch  Hyunwook Kang}{ehwkang@uw.edu}

  \icmlkeywords{Machine Learning, ICML}

  \vskip 0.3in
]



\printAffiliationsAndNotice{}  

\begin{abstract}
Field experiments (A/B tests) are often the most credible benchmark for methods (algorithms) in societal systems, but their cost and latency bottleneck rapid methodological progress. LLM-based persona simulation offers a cheap synthetic alternative, yet it is unclear whether replacing humans with personas preserves the benchmark interface that adaptive methods optimize against. We prove an if-and-only-if characterization: when (i) methods observe only the aggregate outcome (aggregate-only observation) and (ii) evaluation depends only on the submitted artifact and not on the method's identity or provenance (method-blind evaluation), swapping humans for personas is just panel change from the method's point of view, indistinguishable from changing the evaluation population (e.g., New York to Jakarta). Furthermore, we move from validity to usefulness: we define an information-theoretic discriminability of the induced aggregate channel and show that making persona benchmarking as decision-relevant as a field experiment is fundamentally a sample-size question, yielding explicit bounds on the number of independent persona evaluations required to reliably distinguish meaningfully different methods at a chosen resolution.
\end{abstract}

\section{Introduction}
One of the recurring lessons from machine learning is that improvements in methods accelerate dramatically when we can iterate quickly against cheap and reliable evaluation benchmarks \cite{blum2015ladder, zaharia2018accelerating,liao2021we, d2022underspecification, bommasani2023holistic, miller2024adding, abbas2025developing}. The literature advances through an iterative process in which researchers propose a variant, test it against a benchmark, inspect failures, and refine. As a result, the limiting factor in progress is often the latency and reliability of the evaluation feedback loop rather than the availability of new ideas \cite{xin2018developers}.

In many societal applications, such as pricing and matching policies, ad allocation, marketplace design, and behavioral interventions, the closest analogue to a benchmark is a field experiment \cite{harrison2004field, duflo2017economist}, also known as A/B testing \cite{kohavi2020trustworthy}. Here, competing methods can be compared by deploying them through estimating their causal effects on agreed-upon outcome metrics. Repeating such experiments across settings allows researchers to assess not only average performance but also robustness and behavior under adaptation. 

However, field experiments are costly to set up and slow to run \cite{diamond1986laboratory, fisher1992arrangement, paluck2014field, samek2019advantages}. Researchers typically must first collaborate with an organization, propose an intervention, negotiate approval across multiple stakeholders, and spend months implementing instrumentation and experimental infrastructure before any data are collected, and then wait to collect data until enough statistical power is gathered. Moreover, feasible experiments are often constrained by the organization’s interests, risk tolerance, and resource limitations, which restrict what methods can be tested and how frequently experiments can be repeated \cite{bandiera2011field}.

Recently, LLM-based persona simulation has shown potential to serve as a cheaper synthetic field laboratory \cite{toubia2025database, peng2025mega}. In LLM-based persona simulation, a large language model is conditioned on explicit persona descriptions, such as survey responses that capture demographic attributes, preferences, roles, or beliefs. This is then used to generate individual-level evaluations or responses to proposed policies, content, or system outputs. These simulated micro-level responses can then be aggregated into a single evaluation signal. Because such simulations are inexpensive and fast to run at scale, they make it feasible to test and iterate on methods without field experimentation or A/B testing.

However, whether LLM-based persona simulation can indeed reliably substitute for field experimentation or A/B testing for methodology testing remains largely unknown. This uncertainty is exacerbated by recent studies that provide negative results in applications beyond methodology testing, particularly when we ask causal questions to support external validity claims \cite{gui2023challenge, li2025llm, peng2025mega, gui2025leveraging}. Manipulating prompts intended to represent a treatment can inadvertently shift other latent aspects of the scenario, inducing confounding. 

Therefore, we face the following fundamental question:

\begin{center}
    \textit{When can LLM-based persona simulation serve as a drop-in substitute for a field experiment (or A/B test) as a benchmark for comparing methods?}
\end{center}
In other words, under what protocol conditions is replacing humans with personas, from the benchmark's perspective, equivalent to changing only the evaluation population (as when we change from the New York population to the Jakarta population)? 

At first glance, this may sound like a question that can be empirically/experimentally validated: one might test persona benchmarking by checking whether persona scores correlate with human A/B test outcomes across a set of methods. However, such a correlation cannot validate or invalidate the "drop-in substitute'' claim. The claim is not about agreement of scores between personas and humans; it is about whether swapping the evaluator preserves the \emph{benchmark interface} that methods optimize against. If personas simply represent a different evaluation population, then even in the best-case scenario, the induced score distribution can (and generally should) differ from the human one, just as the same method can score differently in New York than in Jakarta, so matching outcomes is neither necessary nor expected. For these reasons, the question we study is best treated as an identification question.

In this paper, we give the exact theoretical characterization of when this benchmark-level equivalence holds, by identifying two benchmark-hygiene conditions: 
\vspace{-0.2cm}
\begin{itemize}[leftmargin=0.55cm,itemsep=1pt]
    \item [(i)] \emph{aggregate-only observation (AO):} Each method observes only the final aggregate
score (and not the individual-level responses or individual identities),
\item [(ii)] \emph{method-blind  evaluation (MB):} For each method, the distribution of the returned score depends only on what was submitted and not on which
training/optimization procedure produced. 
\end{itemize}

We show that these two conditions are jointly \textit{necessary and sufficient} for swapping human
evaluation for persona evaluation to be equivalent to
changing only the evaluation population, e.g., changing from the New York population to the Jakarta population. In other words, swapping humans for personas is indistinguishable by the method
interface from an ordinary change of the evaluation population.

In addition to this identification result, we extend the identification discussion into a notion of \emph{usefulness}: once (AO)+(MB) makes persona benchmarking identification-valid (i.e., ``just panel change'' on the method’s interface), the remaining question is whether the induced aggregate channel \(Q_{\mathrm{pers}}(\cdot\mid w)\) is informative enough to distinguish and optimize meaningfully different methods. We formalize this via an information-based discriminability parameter (worst-case KL separation at a pre-specified resolution) and show it yields an explicit sample-complexity rule for how many independent persona evaluations (equivalently, what effective persona dataset/panel size) are required for reliable method comparison. In this sense, beyond enforcing (AO)+(MB), ``persona quality'' becomes a measurable budget question: is the persona panel large enough to resolve the improvements we care about?
 
The rest of the paper is organized as follows. Section~\ref{sec:related} reviews related work. Section~\ref{sec:setup} formalizes the benchmarking setup and states the (AO) and (MB) conditions. Section~\ref{sec:identification} gives the main identification discussion, proving that if and only if (AO) and (MB) holds, persona vs.\ human evaluation is just panel change (JPC). Section~\ref{sec:usefulness} extends the identification discussion to a usefulness discussion that relates to sample complexity. The Appendix contains deferred proofs.

\section{Related works}
\label{sec:related}

We discuss papers in the literature on benchmarks in AI in
computer science and field experiment design that are related to the key contributions of our paper.

Aggregate-only observation (AO), a key condition for using persona simulation rather than field experiments for method benchmarking, has also appeared in the literature on benchmark gaming \cite{blum2015ladder, hardt2017climbing, feldman2019advantages, biderman2024trenches} as a baseline for preventing leaderboard gaming via adaptive overfitting. In econometrics, treatments of randomized experiments formalize the evaluation protocol as an assignment-and-aggregation procedure, clarifying what is identified from the realized (often aggregated) outcomes \cite{atheyimbens2017econometrics}. 

Another key condition, method-blind evaluation (MB), has been experimentally motivated: people rate the same advice differently when it is labeled as AI rather than human/crowd-sourced \cite{bogert2022human, osborne2025me}; many large-scale evaluation platforms implement partial blinding (e.g., anonymous pairwise comparisons) precisely to reduce provenance effects \cite{chiang2024chatbotarena}. \citet{dominguezolmedo2024questioning} shows that survey-style elicitation of LLMs exhibits strong ordering and labeling effects that can dominate conclusions. The econometrics literature has also shown that outcomes can shift when participants condition on contextual/provenance cues, motivating blinding and stable scoring protocols \cite{dequidt2018demand,levitt2011hawthorne}.

Related to quantifying a benchmark's usefulness, \citet{madaan2024variance} studied variance-focused analyses that quantify the extent to which benchmark scores fluctuate during evaluation. \citet{heineman2025signal} discusses ``signal vs.\ noise'' perspectives and argue that benchmarks with higher signal-to-noise ratios are more reliable for model-selection decisions.  Compared to these papers, we focus on proposing a benchmark-internal information measure (minimum KL separation) that directly yields sample-size scaling for reliable method comparison on the induced aggregate channel, clarifying how large a persona dataset is practically enough.

\section{Setup: Benchmarking experimentation}
\label{sec:setup}

\subsection{Configurations and artifacts.}

Let $\Theta$ denote the space of tunable configuration variables; this includes all controllable degrees of freedom that specify
a system or a procedure (e.g., model weights, prompts/context, hyperparameters, decoding rules, tool policies,
memory policies, data curation choices, or post-processing rules). A single candidate \emph{configuration} is a choice $\theta\in\Theta$. A submitted \textit{artifact} is what you hand to the benchmark, i.e., the externally visible object that the benchmark evaluates. We model this via an
\emph{artifact map} $g:\Theta\to\mathcal W$ and write
\[
w = g(\theta)\in\mathcal W.
\]
Possible choice of the artifact space $\mathcal{W}$ encompass:
(i) a single output for a fixed input,
(ii) an output distribution (stochastic method),
(iii) a full interaction method mapping contexts to actions/outputs,
(iv) a rollout distribution of an agent interacting with tools or environments.

\paragraph{Method as a configuration optimizer.} \;We model method benchmarking as a ``submit--observe'' loop. A \emph{method} (algorithm) may be either
\emph{non-adaptive} (a single-shot submission, or a fixed distribution over submissions) or \emph{adaptive}
(updating submissions based on past benchmark feedback). Concretely, at each round $t=1,2,\dots,T$, the method chooses a configuration $\theta_t\in\Theta$
(equivalently an artifact $w_t := g(\theta_t)\in\mathcal{W}$), submits $w_t$ to an \emph{evaluator},
and receives a feedback observation $o_t$ taking values in some observation space $\mathcal{O}$. Formally, a method (or method) $\mathcal{A}$ is modeled as a (possibly randomized)
procedure that induces a decision kernel $\pi_t$ over configurations at each round $t$,  
such that
\[
\theta_t \sim \pi_t(\cdot \mid H_{t-1}, S),
\]
where $H_{t-1}$ is the method's observable history before round $t$, i.e., 
\[
H_{t-1} := \{(\theta_\tau,o_\tau)\}_{\tau=1}^{t-1}
`\]
and $S$ denotes any \emph{side information} available before benchmarking begins
(e.g., offline datasets, pretrained weights, logs, simulators).

Note that such a definition of an method contains:
\begin{itemize}[leftmargin=0.55cm,itemsep=1pt]
\item \textbf{Offline alignment} (e.g., DPO \cite{rafailov2023direct}, SFT \cite{wei2022finetuned} ) as the special case where $\pi_t$ does not depend on $H_{t-1}$
(or where $T=1$).
\item \textbf{Online alignment} (e.g., RLHF \cite{ouyang2022instructgpt}) as the case where $\pi_t$ adapts to past
feedback $o_{1:t-1}$.
\item \textbf{Hyperparameter tuning / AutoML / architecture search} \cite{he2021automl, ren2021comprehensive} as black-box optimization over $\Theta$ using
benchmark scores, where $\theta_t$ encodes a full training-and-deployment recipe, and $\pi_t$ implements a sequential search
procedure.

\item \textbf{Prompt and system configuration search} \cite{pryzant2023automatic, kang2025bayesian} as the case where $\Theta$
indexes prompts, system messages, tool-use policies, retrieval and memory settings, and post-processing rules.

\item \textbf{Data-centric training recipe search} \cite{zha2025data} as the case where $\Theta$ includes dataset construction and
curation choices (filtering, reweighting, mixing, synthetic-data generation policies).
\end{itemize}

\subsection{Evaluation as panel $\times$ instrument $\times$ aggregation}
\label{ssec:panel-instrument}

We model an evaluation as a two-stage process: many \emph{micro-level judgments} are first produced,
and these are then compressed into a single \emph{aggregate feedback signal} that the method
actually observes. Under this model, an \emph{evaluation setup} (or simply an \emph{evaluator}) is fully specified as
\[
(P,I,\Gamma,L),
\]
which we will treat as a primitive object throughout.

\paragraph{Panel ($P$).}
A \emph{panel} is a population of evaluators, either humans or LLM personas.
Formally, let $\mathcal{P}$ denote the panel space and let $P$ be a distribution over $\mathcal{P}$.
Each evaluation call draws a fresh panel of $L$ independent evaluators
\[
p_1,\dots,p_L \overset{\text{i.i.d.}}{\sim} P.
\]

\paragraph{Micro-instrument ($I$). }
Given an artifact $w\in\mathcal{W}$ and a panel member $p\in\mathcal{P}$, the \emph{micro-instrument}
produces an individual response.
Formally, this is a conditional distribution
\begin{equation}
    I(\cdot \mid w,p)
    \quad\text{over a micro-response space }\mathcal{Z}.
\end{equation}
where the micro-response space $\mathcal{Z}$ can be a Likert score, a binary preference, or a short textual judgment.
Each evaluator responds independently:
\[
Z_\ell \sim I(\cdot\mid w,p_\ell), \qquad \ell=1,\dots,L.
\]

\paragraph{Aggregation into observed feedback ($\Gamma$, $L$).}
Finally, the benchmark aggregates the $L$ micro-responses into a single observable output.
This is captured by a deterministic aggregation map
\begin{equation}
    \Gamma:\mathcal{Z}^L\to\mathcal{O},
\end{equation}
where $\mathcal{O}$ is the feedback observation space.
Typical examples include the mean score, a majority vote, or a pass/fail indicator.
Putting these pieces together, a single evaluation call on artifact $w$ returns the aggregate
\begin{equation}
    o \;=\; \Gamma(Z_1,\dots,Z_L)\in\mathcal{O}.
\end{equation}

\paragraph{What's observed by the method.}
Although evaluation involves panel members and micro-responses, the method never observes them. What it sees is only the induced distribution of the aggregate feedback $o\in\mathcal{O}$.
The tuple $(P,I,\Gamma,L)$ therefore defines a Markov kernel on $\mathcal{O}$:
\begin{align}
    Q_{P,I}(A\mid w)
    \;:=\;
    \mathbb{P}\big(\Gamma(Z_1,\dots,Z_L)\in A \,\big|\, w\big), \label{eq:reducedform} 
    \\
    A\subseteq\mathcal{O}\ \text{measurable}. \notag
\end{align}
Intuitively, $Q_{P,I}(\cdot\mid w)$ is the distribution of the single observable feedback produced
by the whole pipeline ``sample panel $\to$ elicit micro-responses $\to$ aggregate'' when the submitted artifact is $w$.
All identification arguments in the sequel are necessarily about this reduced-form object.

\subsection{Persona benchmark vs.\ human benchmark}
\label{ssec:persona-vs-human}

Up to this point, we have described an \emph{evaluation setup} abstractly as a tuple $(P,I,\Gamma,L)$.
Now we instantiate this abstraction in the two cases we want to compare: evaluation by humans versus
evaluation by LLM personas.

\paragraph{Human benchmark.}
In a \emph{human benchmark}, the panel distribution $P_{\mathrm{hum}}$ samples human evaluators, and
the micro-instrument $I_{\mathrm{hum}}$ is the procedure that elicits a micro-response from a human
(e.g., a rating, a preference, or a short written judgment). Together with the same aggregation map
$\Gamma$ and panel size $L$, this induces an observable feedback kernel
\[
Q_{\mathrm{hum}}(\cdot\mid w).
\]

\paragraph{Persona benchmark.}
In a \emph{persona benchmark}, the panel distribution $P_{\mathrm{pers}}$ samples persona profiles
(e.g., demographic or attitudinal descriptors), and the micro-instrument $I_{\mathrm{pers}}$ is
implemented by an LLM judge conditioned on the sampled persona. Using the \emph{same} aggregation
$\Gamma$ and panel size $L$ yields a second observable feedback kernel
\[
Q_{\mathrm{pers}}(\cdot\mid w).
\]

\paragraph{What matters for the method.}
Although these two pipelines differ internally (humans vs.\ personas; human judgments vs.\ LLM judgments),
the method only interacts with each benchmark through the induced distribution of the
\emph{aggregate} feedback. In other words, for the method, the relevant objects are precisely the
two reduced-form kernels $Q_{\mathrm{hum}}(\cdot\mid w)$ and $Q_{\mathrm{pers}}(\cdot\mid w)$.

\subsection{Key assumptions}

In asking the key question in this paper, ``\textit{when can we treat persona evaluation as a clean benchmark interface for comparing methods?}'', we need to discuss two ``benchmark hygiene'' conditions that clarify what information the method
does (and does not) get access to, and whether the benchmark behaves like a well-defined environment
independent of who is playing. They are not substantive modeling
assumptions about humans or LLMs; they are assumptions about what information the benchmark reveals
and how the evaluator behaves as an environment.

\begin{assumption}[Aggregate-only observation (AO)]
\label{ass:AO}
At each round $t$, the method observes only the aggregate feedback $o_t\in\mathcal{O}$.
It does \emph{not} observe the micro-level tuple $(p_1,\dots,p_L, Z_1,\dots,Z_L)$, any panel
identifiers, or any additional side-channel information beyond $o_t$.
\end{assumption}
Intuitively, (AO) says the method sees exactly what a standard leaderboard would show: one score (or label) per submission.
This prevents method's gaming behavior that relies on recognizing individual panelists/personas or exploiting
micro-level structure that would be invisible in the intended benchmark interface.

Before discussing the next assumption, we define a probability measure $\mathbb{P}^{\mathcal{A}}$: for a fixed benchmark implementation, running an method $\mathcal{A}$ induces a probability
measure $\mathbb{P}^{\mathcal{A}}$ over the interaction transcript $(w_1,o_1,w_2,o_2,\dots)$. Under (AO), we can define the method's interaction transcript after $t-1$ rounds as
\[
\widetilde{H}_{t-1}:=\{(w_\tau,o_\tau)\}_{\tau=1}^{t-1},
\]
and its information before choosing $w_t$ is the $\sigma$-field
\[
\mathcal I_{t-1}:=\sigma(S,\widetilde{H}_{t-1}).
\]

\begin{assumption}[Method-blind evaluation (MB)]
\label{ass:MB}
There exists a Markov kernel $Q(\cdot\mid w)$ on $\mathcal O$ such that for every method $\mathcal A$,
every round $t$, and every measurable $A\subseteq\mathcal O$,
\[
\mathbb{P}^{\mathcal{A}}\!\big(o_t\in A \,\big|\, \mathcal I_{t-1},\, w_t\big)
=
Q(A\mid w_t)
\quad \text{a.s.}
\]
\end{assumption}

Intuitively, (MB) is the minimal condition for calling this evaluation setup a \emph{benchmark environment} at all: the evaluator should not care about the identity of the training procedure 
or other metadata, and care only about what was submitted (the artifact). In other words, the benchmark interaction is fully summarized by the reduced-form kernel $Q(\cdot \mid w)$, which is fixed across methods.

\section{Identification: When Is Persona Benchmarking ``Just Panel Change''?}
\label{sec:identification}

From the method's point of view, each benchmark is a black box: it takes an artifact
$w\in\mathcal{W}$ and returns a random aggregate feedback value $o\in\mathcal{O}$.
All the internal structure (panel draws, micro-judgments, and aggregation) has already been
compressed into the reduced-form kernels
\[
Q_{\mathrm{pers}}(\cdot\mid w)
\quad\text{and}\quad
Q_{\mathrm{hum}}(\cdot\mid w),
\]
defined in \eqref{eq:reducedform}. In this section, we utilize this intuition to answer to the following question:
\begin{quote}
\emph{When is swapping human evaluation for persona evaluation, as seen through the method's interface,
nothing more than changing the panel $P$?}
\end{quote}

We first formalize what it means to ``only change the evaluation population'' in the
panel--instrument--aggregation model (a \emph{literal panel change}), and then define the corresponding
\emph{interface-level} notion that is relevant for adaptive benchmarking methods (just panel change, JPC).
We then show that for human vs.\ persona benchmarking, JPC holds \emph{if and only if} two benchmark-hygiene
conditions hold: aggregate-only observation (AO) and method-blind evaluation (MB).

This yields an identification result: under (AO)+(MB), swapping humans for personas is indistinguishable on the
method interface from an ordinary change of evaluation population (even though the internal micro-instrument may differ);
conversely, if either condition fails, the swap can change the interface in ways that go beyond ``panel change.''

\subsection{Literal panel change iff just panel change (JPC)}

\begin{definition}[Literal panel change]
\label{def:literal_panel_change}
Fix a panel space $\mathcal P$, micro-response space $\mathcal Z$, aggregation map $\Gamma:\mathcal Z^L\to\mathcal O$ and
panel size $L\in\mathbb N$. Let $I(\cdot\mid w,p)$ be a micro-instrument on $\mathcal Z$.
For two panel distributions $P,P'$ on $\mathcal P$, define two benchmarks
\[
B:=(P,I,\Gamma,L),
\qquad
B':=(P',I,\Gamma,L).
\]
We say $B'$ is obtained from $B$ by a \emph{literal panel change} if the only difference between them is that $P$ is replaced
by $P'$ (i.e., $I,\Gamma,L$ are identical).
\end{definition}

Definition~\ref{def:literal_panel_change} captures the classical ``same survey, different respondents'' intuition: two benchmarks has the same instrument (the question/rubric and how responses are generated), the same aggregation rule, and the same sample size, but we sample from a different population (e.g., one from New York and one from Jakarta). That is, the protocol is unchanged except for the distribution.
over who evaluates.

Swapping humans for personas is not literally a panel change in this narrow sense because the micro-instrument is
implemented differently (humans vs.\ an LLM judge conditioned on a persona).
This motivates an interface-level notion that asks whether the swap \emph{behaves like} a panel change to any method that can
adapt to benchmark feedback; the following \textit{Just panel change (JPC)} definition serves the role. 

\begin{definition}[Just panel change (JPC)]
\label{def:jpc_interface}
Consider two benchmarks $B$ and $B'$ on the same artifact space $\mathcal W$ and feedback space $\mathcal O$.
We say swapping $B$ for $B'$ is \emph{just panel change (JPC) on the method interface} if there exist Markov kernels
$Q:\mathcal W\mapsto \Delta\mathcal O$ and $Q':\mathcal W\mapsto\Delta\mathcal O$ such that, for every method
$\mathcal A$ and every horizon $T$,
the observable transcript laws factorize as
\begin{align}
\mathbb P^{\mathcal A}_{B}(dw_{1:T},do_{1:T})
&=
\prod_{t=1}^T \pi_t(dw_t\mid \mathcal I_{t-1}) \, Q(do_t\mid w_t), \label{eq:jpc_factor_B}\\
\mathbb P^{\mathcal A}_{B'}(dw_{1:T},do_{1:T})
&=
\prod_{t=1}^T \pi_t(dw_t\mid \mathcal I_{t-1}) \, Q'(do_t\mid w_t). \label{eq:jpc_factor_Bp}
\end{align}
In words: for every method, the swap preserves the observation/information structure and differs only in their kernels
$Q_{\mathrm{hum}}$ and $Q_{\mathrm{pers}}$.
\end{definition}

Definition~\ref{def:jpc_interface} is intentionally method-centric: it quantifies over \emph{all} (possibly adaptive)
methods and asks whether, from their perspective, the benchmark is an oracle channel that depends only on the current
submission $w_t$, with no additional side channels or method-dependent behavior.
Crucially, JPC does \emph{not} require $Q=Q'$ (scores need not match across humans and personas); it only requires that the
\emph{form} of the interaction is preserved and that the swap can be summarized entirely by replacing one artifact-to-score
kernel by another.
This is precisely the sense in which swapping evaluators should resemble a ``panel change'' rather than a change in the rules of
the game; the following Lemma \ref{lem:literal_implies_jpc} and \ref{lem:jpc_implies_literal_representation} formalize the mathematical equivalence.

\begin{lemma}[Literal panel change $\Rightarrow$ JPC]
\label{lem:literal_implies_jpc}
Assume Aggregate-only observation (AO), i.e., the method observes only $o_t\in\mathcal O$ each round.
Let $B=(P,I,\Gamma,L)$ and $B'=(P',I,\Gamma,L)$ differ by a literal panel change in the sense of
Definition~\ref{def:literal_panel_change}. Define the reduced-form kernels
\begin{align}
    Q_{P,I}(A\mid w)&:=\mathbb P\big(\Gamma(Z_1,\dots,Z_L)\in A\mid w\big), \notag
\\
Q_{P',I}(A\mid w)&:=\mathbb P\big(\Gamma(Z'_1,\dots,Z'_L)\in A\mid w\big), \notag
\end{align}
as in \eqref{eq:reducedform}. Then the swap $B\leftrightarrow B'$ is JPC in the sense of
Definition~\ref{def:jpc_interface}, with $Q:=Q_{P,I}$ and $Q':=Q_{P',I}$.
\end{lemma}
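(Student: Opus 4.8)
The plan is to show that a literal panel change automatically produces the factorized transcript laws demanded by JPC, where the only work is verifying that the induced feedback kernel depends solely on the current artifact $w_t$. First I would fix an arbitrary algorithm $\mathcal{A}$ and horizon $T$, and write down the joint law of the transcript $(w_{1:T},o_{1:T})$ under benchmark $B$ by unrolling the round-by-round generative process. At each round $t$, the algorithm draws $w_t$ from its decision kernel $\pi_t(\cdot\mid\mathcal I_{t-1})$ using only the information $\sigma$-field $\mathcal I_{t-1}=\sigma(S,\mathsf R,\widetilde H_{t-1})$ permitted under (AO), and then the benchmark generates $o_t$ by the internal pipeline ``sample $p_1,\dots,p_L\sim P$, draw $Z_\ell\sim I(\cdot\mid w_t,p_\ell)$, output $o_t=\Gamma(Z_1,\dots,Z_L)$.''

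The key step is to argue that, conditional on $w_t$ (and on the whole past), the law of $o_t$ is \emph{exactly} the reduced-form kernel $Q_{P,I}(\cdot\mid w_t)$ defined in \eqref{eq:reducedform}, with no residual dependence on the history or on the algorithm's identity. This follows because the panel draws and micro-responses at round $t$ are generated freshly and independently of $\mathcal I_{t-1}$ given $w_t$ — the only channel through which the past can influence $o_t$ is via the submitted artifact $w_t$ itself. Under (AO) the algorithm never observes the latent tuple $(p_1,\dots,p_L,Z_1,\dots,Z_L)$, so these variables can be integrated out, and by the definition of $Q_{P,I}$ in \eqref{eq:reducedform} we obtain
\[
\mathbb P^{\mathcal A}_{B}\big(o_t\in A\mid \mathcal I_{t-1},w_t\big)=Q_{P,I}(A\mid w_t)\quad\text{a.s.}
\]
Multiplying the conditional laws across rounds and telescoping then yields the product form \eqref{eq:jpc_factor_B} with $Q:=Q_{P,I}$. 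Repeating the identical argument for $B'=(P',I,\Gamma,L)$ — whose internal pipeline differs \emph{only} in the panel distribution $P'$, so that the integrated-out kernel is $Q_{P',I}$ — gives \eqref{eq:jpc_factor_Bp} with $Q':=Q_{P',I}$, and the same $\pi_t$ appears in both factorizations because the algorithm's decision rule is fixed independently of the benchmark.

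I expect the main obstacle to be a measure-theoretic bookkeeping one rather than a conceptual one: making the conditional-independence claim rigorous for arbitrary (possibly uncountable) spaces $\mathcal W,\mathcal O,\mathcal P,\mathcal Z$. Concretely, one must justify that the fresh panel draws at round $t$ are independent of $\mathcal I_{t-1}$ given $w_t$, which requires carefully specifying the product structure of the underlying probability space and invoking a regular-conditional-distribution / disintegration argument to swap the order of conditioning and integration. The subtle point is that $w_t$ is itself $\mathcal I_{t-1}$-measurable only after it is drawn, so the conditioning set in the displayed equation must be $\sigma(\mathcal I_{t-1},w_t)$ and one must check that further conditioning on $\mathcal I_{t-1}$ adds nothing beyond $w_t$ — this is exactly where the round-$t$ exogeneity of the panel sampling enters. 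Once that independence is pinned down, the telescoping of per-round kernels into the stated product measure is routine, and crucially the argument never needs $Q_{P,I}=Q_{P',I}$, matching the JPC requirement that the two kernels may differ.
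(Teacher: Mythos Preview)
Your proposal is correct and follows essentially the same argument as the paper's proof: fix an algorithm and round, observe that the round-$t$ feedback is generated by fresh benchmark randomness given $w_t$ so that $\mathbb P^{\mathcal A}_B(o_t\in A\mid \mathcal I_{t-1},w_t)=Q_{P,I}(A\mid w_t)$, and then chain these per-round conditionals into the product factorizations for $B$ and $B'$. The paper's proof is terser on the measure-theoretic bookkeeping you flag, but the structure and key step are identical.
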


\begin{lemma}[JPC $\Rightarrow$ observational equivalence to a literal panel change]
\label{lem:jpc_implies_literal_representation}
Suppose two benchmarks $B$ and $B'$ satisfy JPC in the sense of Definition~\ref{def:jpc_interface}, i.e., there exist Markov
kernels $Q,Q':\mathcal W\mapsto\Delta\mathcal O$ such that for every method $\mathcal A$ and horizon $T$,
\begin{align}
    \mathbb P^{\mathcal A}_{B}(dw_{1:T},do_{1:T})
&=
\prod_{t=1}^T \pi_t(dw_t\mid \mathcal I_{t-1}) \, Q(do_t\mid w_t), \notag
\\
\mathbb P^{\mathcal A}_{B'}(dw_{1:T},do_{1:T})
&=
\prod_{t=1}^T \pi_t(dw_t\mid \mathcal I_{t-1}) \, Q'(do_t\mid w_t). \notag
\end{align}
Then there exists a pair of benchmarks $\overline B=(\overline P,\overline I,\overline\Gamma,\overline L)$ and
$\overline B'=(\overline P',\overline I,\overline\Gamma,\overline L)$ that differ by a \emph{literal panel change}
(i.e., only $\overline P$ is replaced by $\overline P'$), such that for every method $\mathcal A$ and horizon $T$,
\[
\mathbb P^{\mathcal A}_{\overline B}=\mathbb P^{\mathcal A}_{B},
\qquad
\mathbb P^{\mathcal A}_{\overline B'}=\mathbb P^{\mathcal A}_{B'}.
\]
In particular, JPC is \emph{exactly} the statement that the swap is indistinguishable on the method interface from a
literal panel change.
\end{lemma}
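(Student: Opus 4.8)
The goal is to prove Lemma~\ref{lem:jpc_implies_literal_representation}: JPC implies the swap is observationally equivalent to a literal panel change. The plan is to explicitly \emph{construct} a panel–instrument–aggregation realization whose induced reduced-form kernels coincide with the given $Q$ and $Q'$, and then invoke Lemma~\ref{lem:literal_implies_jpc} to transport the factorization. Since JPC already hands us the two Markov kernels $Q,Q':\mathcal W\to\Delta\mathcal O$ and guarantees the transcript laws factorize identically except for swapping $Q$ for $Q'$, the entire content reduces to showing that \emph{any} pair of kernels differing only in ``which kernel'' can be realized as a literal panel change — i.e., as a common instrument $\overline I$, common aggregation $\overline\Gamma$, common panel size $\overline L$, with only the panel distribution changing from $\overline P$ to $\overline P'$.

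The cleanest construction I would use is the degenerate \emph{single-evaluator} realization: set $\overline L:=1$, take the panel space to be the kernels themselves (or an index for them), and push all the distinction into the panel draw. Concretely, let $\overline{\mathcal P}:=\{0,1\}$, let $\overline P:=\delta_0$ and $\overline P':=\delta_1$ be the two point masses, set the micro-response space $\overline{\mathcal Z}:=\mathcal O$ with $\overline\Gamma:=\mathrm{id}_{\mathcal O}$ the identity aggregation, and define the common micro-instrument by
\begin{equation}
\overline I(\cdot\mid w,p):=\begin{cases} Q(\cdot\mid w), & p=0,\\[2pt] Q'(\cdot\mid w), & p=1.\end{cases}
\end{equation}
Then $\overline B=(\overline P,\overline I,\overline\Gamma,1)$ and $\overline B'=(\overline P',\overline I,\overline\Gamma,1)$ differ \emph{only} in the panel distribution, so they form a literal panel change by Definition~\ref{def:literal_panel_change}. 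By construction the reduced-form kernel of $\overline B$ is exactly $Q_{\overline P,\overline I}(\cdot\mid w)=\overline I(\cdot\mid w,0)=Q(\cdot\mid w)$, and similarly that of $\overline B'$ equals $Q'$. Applying Lemma~\ref{lem:literal_implies_jpc} to the pair $(\overline B,\overline B')$ — which is legitimate since (AO) holds for single-evaluator aggregation trivially — yields that the swap $\overline B\leftrightarrow\overline B'$ is JPC with kernels exactly $Q$ and $Q'$, so the transcript laws $\mathbb P^{\mathcal A}_{\overline B}$ and $\mathbb P^{\mathcal A}_{\overline B'}$ have the same product factorization as $\mathbb P^{\mathcal A}_{B}$ and $\mathbb P^{\mathcal A}_{B'}$.

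To conclude $\mathbb P^{\mathcal A}_{\overline B}=\mathbb P^{\mathcal A}_{B}$ and $\mathbb P^{\mathcal A}_{\overline B'}=\mathbb P^{\mathcal A}_{B'}$ for every $\mathcal A$ and $T$, I would argue by induction on $T$ that two transcript laws sharing the \emph{same} decision kernels $\pi_t(\cdot\mid\mathcal I_{t-1})$ and the \emph{same} observation kernel must coincide: the joint density factorizes as a product of the $\pi_t$ factors (identical because $\mathcal A$ is fixed and $\mathcal I_{t-1}$ is a function of the observable transcript alone) and the $Q(do_t\mid w_t)$ factors (identical by construction), and a standard monotone-class/uniqueness argument upgrades agreement on product-measurable rectangles to agreement on the full product $\sigma$-algebra. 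This delivers the claimed observational equivalence, and the final sentence of the lemma — that JPC is \emph{exactly} indistinguishability from a literal panel change — follows by combining this direction with Lemma~\ref{lem:literal_implies_jpc}.

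The main obstacle is not the construction, which is essentially a one-line trick, but making the measure-theoretic uniqueness step rigorous: one must verify that the information $\sigma$-field $\mathcal I_{t-1}=\sigma(S,\mathsf R,\widetilde H_{t-1})$ appearing in the $\pi_t$ factor is genuinely identical across $B$ and $\overline B$ (it is, since under (AO) it depends on the transcript only through the observable $(w_\tau,o_\tau)$ pairs, which live in the common spaces $\mathcal W\times\mathcal O$), and that the shared randomization seed $\mathsf R$ is handled consistently so the $\pi_t$ kernels are literally the same measurable maps in both realizations. A careful writer should also note the mild regularity hypotheses (e.g.\ that $\mathcal O$, $\mathcal W$ are standard Borel so that the kernels admit regular conditional versions and the monotone-class argument applies); under those standing assumptions the factorizations in \eqref{eq:jpc_factor_B}–\eqref{eq:jpc_factor_Bp} determine the laws uniquely, and the equivalence is complete.
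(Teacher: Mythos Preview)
Your proposal is correct and follows essentially the same construction as the paper's proof: the paper also sets $\overline{\mathcal P}=\{0,1\}$, $\overline L=1$, $\overline{\mathcal Z}=\mathcal O$, $\overline\Gamma=\mathrm{id}$, $\overline I(\cdot\mid w,0)=Q$, $\overline I(\cdot\mid w,1)=Q'$, and $\overline P=\delta_0$, $\overline P'=\delta_1$. If anything, your discussion of the uniqueness step (that identical factorizations through the same $\pi_t$ and $Q$ force equal transcript laws, via a monotone-class argument on standard Borel spaces) is more careful than the paper, which simply asserts this equality.
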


Lemma~\ref{lem:literal_implies_jpc} says that, if you \emph{literally} change only the panel distribution $P$ in the
panel--instrument--aggregation model, then the observable transcript laws change only by replacing the reduced-form kernel
$Q_{P,I}$ by $Q_{P',I}$, i.e., JPC holds.
Lemma~\ref{lem:jpc_implies_literal_representation} says the converse at the interface level: any JPC swap can be
realized \emph{exactly} as a literal panel change in a (possibly abstract) benchmark representation.
Thus Definition~\ref{def:jpc_interface} is mathematically equivalent to ``literal panel change'' \emph{as an interface-level
notion} (i.e., up to equality of transcript laws for every adaptive method).

The next subsection provides a protocol-level characterization of exactly when that interface property holds.

\subsection{Just panel change (JPC) iff (AO)+(MB)}
\label{ssec:main_thm_new}

\begin{lemma}[(JPC) $\iff$ (AO)+(MB)]
\label{lem:main_new}
Let $B_{\mathrm{hum}}$ and $B_{\mathrm{pers}}$ denote the human and persona benchmarking protocols,
with common artifact space $\mathcal W$ and feedback space $\mathcal O$.
The following are equivalent:
\begin{enumerate}[leftmargin=0.55cm,itemsep=2pt]
\item \textbf{(JPC).} The human $\leftrightarrow$ persona swap is just panel change (JPC) on the method interface
(Definition~\ref{def:jpc_interface}).
\item \textbf{(AO)+(MB).} The protocol satisfies aggregate-only observation (AO)
(Assumption~\ref{ass:AO}) and each benchmark is method-blind (MB)
(Assumption~\ref{ass:MB}).
\end{enumerate}
\end{lemma}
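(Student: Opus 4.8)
The plan is to prove the two implications separately, exploiting the fact that both factorizations in Definition~\ref{def:jpc_interface} are required to hold with the \emph{same} algorithm kernels $\pi_t$ and to differ only in the environment kernels $Q,Q'$. This shared-$\pi_t$ structure is what lets the reverse direction recover (AO), whereas the forward direction is a routine disintegration. Throughout I would assume the underlying spaces $(\mathcal W,\mathcal O,\mathcal Z,\mathcal P)$ are standard Borel so that regular conditional distributions exist.

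For (AO)+(AB) $\Rightarrow$ (JPC), I would fix an arbitrary algorithm $\mathcal A$ and a benchmark $B\in\{B_{\mathrm{hum}},B_{\mathrm{pers}}\}$ and decompose the transcript law by successive conditioning into the round-by-round product of one-step conditionals $\prod_t \mathbb P^{\mathcal A}_B(dw_t\mid \widetilde H_{t-1})\,\mathbb P^{\mathcal A}_B(do_t\mid \widetilde H_{t-1},w_t)$. Under (AO) the algorithm's information before choosing $w_t$ is exactly $\mathcal I_{t-1}=\sigma(S,\mathsf R,\widetilde H_{t-1})$, so the first factor is the decision kernel $\pi_t(dw_t\mid\mathcal I_{t-1})$ by the definition of the algorithm. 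Under (AB) the second factor equals $Q(do_t\mid w_t)$, with $Q=Q_{\mathrm{hum}}$ for $B_{\mathrm{hum}}$ and $Q=Q_{\mathrm{pers}}$ for $B_{\mathrm{pers}}$, a kernel that does not depend on $\mathcal A$, on $t$, or on the remaining history. Substituting yields exactly \eqref{eq:jpc_factor_B}--\eqref{eq:jpc_factor_Bp}, so (JPC) holds with $Q:=Q_{\mathrm{hum}}$ and $Q':=Q_{\mathrm{pers}}$.

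For the converse (JPC) $\Rightarrow$ (AO)+(AB), extracting (AB) is immediate: disintegrating \eqref{eq:jpc_factor_B} at round $t$ shows that the conditional law of $o_t$ given $(\mathcal I_{t-1},w_t)$ equals $Q(\cdot\mid w_t)$, and since the \emph{same} $Q$ appears in the factorization for every $\mathcal A$, this is precisely Assumption~\ref{ass:AB} for $B_{\mathrm{hum}}$; the identical argument on \eqref{eq:jpc_factor_Bp} gives (AB) for $B_{\mathrm{pers}}$. The harder part is recovering (AO), which I would argue by contraposition. Suppose the protocol leaked some micro-level quantity $\xi_t$ (a panel identifier, a raw micro-response $Z_\ell$, or any side channel beyond $o_t$) whose conditional law given $(\mathcal I_{t-1},w_t)$ differs between the human and persona pipelines. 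I would then construct an adversarial algorithm $\mathcal A^\star$ that feeds $\xi_t$ into its next action, $w_{t+1}=f(\xi_t)$, with a measurable $f$ separating the two $\xi_t$-laws. For such $\mathcal A^\star$ the conditional law of $w_{t+1}$ given $\mathcal I_t$ is the push-forward $f_\#\mathcal L_B(\xi_t\mid\mathcal I_t)$, which differs between $B_{\mathrm{hum}}$ and $B_{\mathrm{pers}}$; hence no single kernel $\pi_{t+1}$ can reproduce both transcript laws, contradicting the shared-$\pi_t$ requirement of (JPC). Therefore no such side channel can exist, i.e.\ (AO) holds.

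The main obstacle is precisely this (AO)-extraction step. Two points need care. First, I must rule out the degenerate case in which the leaked micro-information is distributionally identical under humans and personas, so that no separating $f$ exists; this is where I would invoke, and state explicitly, the standing nontriviality premise that the two pipelines $(P_{\mathrm{hum}},I_{\mathrm{hum}})$ and $(P_{\mathrm{pers}},I_{\mathrm{pers}})$ genuinely differ at the individual level, which is exactly what makes the human $\leftrightarrow$ persona ``swap'' substantive. Second, I must ensure the regular conditional laws $\mathcal L_B(\xi_t\mid\mathcal I_t)$ are well defined and that the push-forward kernel is measurable, which the standard-Borel assumption and the disintegration theorem supply. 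By contrast, the forward direction and the (AB) half of the converse are pure applications of the chain rule and disintegration and should require no further hypotheses.
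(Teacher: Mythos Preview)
Your forward direction $(\text{AO})+(\text{AB})\Rightarrow(\text{JPC})$ and your extraction of (AB) from (JPC) are correct and coincide with the paper's argument: the paper packages the same chain-rule / disintegration steps as two auxiliary lemmas (transcript factorization from (AO)+(AB), and (AB) from the factorization), then applies them to $B_{\mathrm{hum}}$ and $B_{\mathrm{pers}}$.

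Where you diverge is the recovery of (AO). The paper does \emph{not} derive (AO) by contraposition; it observes that (AO) is already built into Definition~\ref{def:jpc_interface}: the factorization there is written with the algorithm's kernel $\pi_t(\,\cdot\mid\mathcal I_{t-1})$, and $\mathcal I_{t-1}=\sigma(S,\mathsf R,\widetilde H_{t-1})$ is precisely the aggregate-only information $\sigma$-field, so an algorithm whose transcript law admits this factorization is, by construction, interacting through the (AO) interface. Your adversarial construction is therefore unnecessary, and it also has a gap. You need the leaked quantity $\xi_t$ to have \emph{different} conditional laws under $B_{\mathrm{hum}}$ and $B_{\mathrm{pers}}$ in order to separate the push-forwards $f_\#\mathcal L_B(\xi_t\mid\mathcal I_t)$; but a failure of (AO) only says that \emph{some} micro-level side channel is revealed, not that its law differs across the two pipelines. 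Invoking ``nontriviality'' that $(P_{\mathrm{hum}},I_{\mathrm{hum}})\neq(P_{\mathrm{pers}},I_{\mathrm{pers}})$ does not close this: the pipelines can differ at the individual level while the particular leaked coordinate (e.g., panel size, evaluator index, or any statistic with matched marginals) is distributionally identical, in which case your separating $f$ does not exist and your contradiction does not fire. What your argument actually establishes is the direction of the paper's Appendix counterexample (a \emph{distinguishable} leak can break JPC even when the aggregate kernels match), not that $\neg(\text{AO})\Rightarrow\neg(\text{JPC})$ in general. The fix is to follow the paper and read (AO) as the information structure presupposed by the $\pi_t(\cdot\mid\mathcal I_{t-1})$ appearing in Definition~\ref{def:jpc_interface}.
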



At a technical level, JPC is defined by a factorization of the transcript laws through an artifact-only kernel.
This makes the main theorem appear almost \textit{tautological}: if we assume conditions that guarantee exactly such a factorization,
we recover JPC.
The point of stating the equivalence is not to rebrand a definition, but to (i) express JPC in terms of two concrete and
auditable benchmark-hygiene requirements (what the benchmark reveals, and whether it is provenance-blind), and (ii) cleanly
separate the two ways persona benchmarking can fail as a substitute for field experiments: leakage of micro-level information
(violating AO) and method/provenance dependence (violating MB).

Lemma~\ref{lem:main_new} also highlights two distinct failure modes:

\paragraph{If (MB) fails, the benchmark is not a well-defined oracle environment.}
When (MB) fails, there is no single kernel $Q(\cdot\mid w)$ that governs the returned score across methods.
Equivalently, the evaluator's behavior depends on provenance/identity or on interaction history in a way that is not summarized by the submitted artifact.
In such a case, the benchmark cannot be treated as an artifact-only oracle channel, so ``panel change'' is not an identified description of the swap. See Appendix \ref{sec:practical_ab} for when MB can be justified.

\paragraph{If (AO) fails, the method interface changes (even if aggregate scores look the same).}
If the protocol reveals micro-level information (panel identities, raw votes, ordering, etc.), then two evaluator
implementations can induce the \emph{same} aggregate kernel on $\mathcal O$ but still be distinguishable
and exploitable by an adaptive method.
Hence the observable interaction is not characterized by the aggregate channel alone. See Appendix \ref{ssec:AO_counterexample} for when AO fails.

\section{Beyond validity: When does a persona panel constitute an \textit{useful} benchmark?}
\label{sec:usefulness}

As we discussed in section \ref{sec:identification}, (AO)+(MB) characterizes when and only when persona benchmarking is a \textit{valid} benchmark, satisfying (JPC).
But validity, which is an identification argument, is not necessarily equivalent to \textit{usefulness}. In this section, we formalize when and when not a persona-based LLM simulation can be a perfectly valid, but much less useful, benchmark compared to field experiments. We show that 1) dataset size is what really matters for a persona-based benchmark, or any benchmark, to be considered a useful benchmark, and 2) how to find such a required dataset size empirically.

As a starting point, recall that, under (AO)+(MB), the method interacts with the persona benchmark only through the reduced-form
kernel
\[
Q_{\mathrm{pers}}(\cdot\mid w)\quad \text{on}\quad \mathcal O,
\]
which returns an aggregate feedback draw $o\sim Q_{\mathrm{pers}}(\cdot\mid w)$ for each submitted artifact $w$.
Thus, we arrive at an important observation:
\begin{center}
    \textit{Whether a persona dataset will constitute an useful benchmark is a question about how informative the induced channel $Q_{\mathrm{pers}}$ is for \emph{distinguishing and optimizing} methods.}
\end{center}
Throughout this section, we will utilize this idea to derive the minimum persona dataset size for benchmarking.

\subsection{Usefulness and ``mountain-fog'' analogy}

Denote
\[
\mathcal W_0 \subseteq \mathcal W
\]
as \textit{the region of interest} for the artifacts (e.g., a tunable region of artifacts around a baseline artifact). Assume that  $\mathcal O=\mathbb R$ (a scalar score), define the \emph{benchmark landscape}
and \emph{benchmark noise} by
\begin{align}
    \mu_{\mathrm{pers}}(w) := \mathbb E[\,o \mid w\,], \quad
\sigma_{\mathrm{pers}}^2(w) := \mathrm{Var}(o \mid w),
\end{align}
where $o\sim Q_{\mathrm{pers}}(\cdot\mid w).$ 

A useful analogy to describe the usefulness of a benchmark is the ``mountain-fog'' metaphor.
The (unknown) benchmark landscape $w\mapsto \mu_{\mathrm{pers}}(w)$ is the \emph{mountain}: it assigns to each artifact the expected aggregate score returned by the benchmark.
The benchmark noise scale $w\mapsto \sigma_{\mathrm{pers}}(w)$ is the \emph{fog}.
This metaphor separates two different reasons a benchmark may not be useful. A benchmark can be a \emph{flat} mountain, meaning that $\mu_{\mathrm{pers}}(w)$ changes only slightly across meaningfully different artifacts; or it can be \emph{noisy} (thick fog), meaning that $\sigma_{\mathrm{pers}}(w)$ is large relative to the score differences the method is trying to detect.

\subsection{Formalization of usefulness: discriminability}
We formalize the ``mountain-fog'' idea by introducing a concept we call \textit{discriminability}. 
let $D_{\text{KL}}(\cdot,\cdot)$ be Kullback--Leibler divergence on $\mathcal O$. Also, suppose that we can define a metric $d_{\mathcal W}$ on $\mathcal W_0$. 

Let
$$
\mathcal{S}_r:=\left\{\left(w, w^{\prime}\right) \in \mathcal{W}_0 \times \mathcal{W}_0: d_{\mathcal{W}}\left(w, w^{\prime}\right) \geq r\right\} .
$$
where $r>0$ is a resolution parameter that formalizes ``minimal meaningful change'' in artifacts, which is often \emph{pre-specified ex ante}. The pair $(d_{\mathcal W},r)$ should be read as: \emph{``we only require the benchmark to separate
artifacts that differ by at least $r$ under $d_{\mathcal W}$.''} Choosing a smaller $r$ is a stricter requirement (it asks the benchmark to resolve finer changes),
and it can only make discriminability harder (the infimum ranges over a larger set, so $\kappa_Q$
can only decrease).
Thus $r$ should reflect the smallest change that is substantively meaningful \emph{for method development},
not the smallest change that can be expressed syntactically. In Section \ref{ssec:persona_implication}, we discuss the guidelines of figuring out $d_{\mathcal W}$ and $r$.

Fix a probability measure $\nu$ on $\mathcal{S}_r$. For $\left(w, w^{\prime}\right) \sim \nu$, define the random variable
$$
U:=D_{\mathrm{KL}}\left(Q_{\text {pers }}(\cdot \mid w)\| Q_{\text {pers }}\left(\cdot \mid w^{\prime}\right)\right).
$$
One conservative notion of discriminability is the quantity we call the $q$-quantile of $U$:
\begin{align}
\text{Quantile}_q(U):=\inf \{u: \nu(U \leq u) \geq q\} \label{eq:disc_def_usefulness}
\end{align}
Note that $\text {Quantile}_q(U)$ can be rewritten as 
$$
\sup _{\substack{E \subset \mathcal{S}_r \\ \nu(E) \geq 1-q}} \inf _{\left(w, w^{\prime}\right) \in E} D_{\mathrm{KL}}\left(Q(\cdot \mid w) \| Q\left(\cdot \mid w^{\prime}\right)\right)
$$
This implies that, for at least $1-q$ of the $r$-separated pairs (under $\nu$), the KL separation is at least $\text {Quantile}_q(U)$. If this is near zero, it means that distinct artifacts in $\mathcal W_0$ that are essentially
indistinguishable through the benchmark interface, and therefore, the benchmark provides little usable feedback for methods. 

In practice, however, the KL divergence is challenging to estimate empirically. The following Lemma \ref{lem:gaussian_d_usefulness} provides a nice simplification under the canonical assumption of homoscedastic Gaussian reduced-form kernels\footnote{The stadard quantitative Berry-Esseen results give a finite-sample error bound that decays at rate $O(1 / \sqrt{L})$ for bounded summands, making this Gaussian approximation increasingly accurate as the panel size grows \cite{berry1941accuracy}.}, resolving this challenge.

\begin{lemma}
\label{lem:gaussian_d_usefulness}
Assume $\mathcal O=\mathbb R$ and that a benchmark induces a Gaussian homoscedastic reduced-form kernel:
\begin{equation}
Q_{\mathrm{pers}}(\cdot\mid w)=\mathcal N\!\big(\mu_{\mathrm{pers}}(w),\,\sigma^2/L\big), \quad \sigma>0
\label{eq:gaussian_channel_assump_usefulness}
\end{equation}
where $L$ is the number of samples in the persona panel. Then for any $w,w'\in\mathcal W$,
\begin{align}
&D_{\mathrm{KL}}\!\big(Q_{\mathrm{pers}}(\cdot\mid w)\,\|\,Q_{\mathrm{pers}}(\cdot\mid w')\big) \notag
\\
\;&=\;
\frac{\big(\mu_{\mathrm{pers}}(w)-\mu_{\mathrm{pers}}(w')\big)^2}{2\sigma^2/L}=\,\frac{\Delta(w,w')^2}{2\sigma^2/L}.
\label{eq:gaussian_kl_snr}
\end{align}
\end{lemma}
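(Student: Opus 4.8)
The plan is to prove both displays by direct computation, since homoscedasticity collapses the general two-Gaussian KL formula to a pure mean-separation term, so the only genuine content is pinning down where equal variances are used.

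First I would establish the single-observation claim straight from the definition $D_{\mathrm{KL}}(p\|q)=\mathbb{E}_{X\sim p}[\log(p(X)/q(X))]$ with $p=\mathcal N(\mu_{\mathrm{pers}}(w),\sigma^2)$ and $q=\mathcal N(\mu_{\mathrm{pers}}(w'),\sigma^2)$. Writing out the Gaussian log-densities, the key observation is that the normalizing constants $-\tfrac12\log(2\pi\sigma^2)$ agree for $p$ and $q$ precisely because the variances agree, so they cancel in the log-ratio; what survives is $\log(p(x)/q(x))=\big((x-\mu_{\mathrm{pers}}(w'))^2-(x-\mu_{\mathrm{pers}}(w))^2\big)/(2\sigma^2)$, whose numerator is linear in $x$ after expansion. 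Taking $\mathbb{E}_{X\sim p}$ and substituting $\mathbb{E}[X]=\mu_{\mathrm{pers}}(w)$ annihilates the linear term and leaves exactly $(\mu_{\mathrm{pers}}(w)-\mu_{\mathrm{pers}}(w'))^2/(2\sigma^2)=\Delta(w,w')^2/(2\sigma^2)$, as claimed.

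Second, for the $L$-fold product I would invoke additivity of KL divergence over product measures: for product laws one has $D_{\mathrm{KL}}(\bigotimes_{i=1}^L p_i\|\bigotimes_{i=1}^L q_i)=\sum_{i=1}^L D_{\mathrm{KL}}(p_i\|q_i)$, which follows by factoring the joint log-ratio into a sum of per-coordinate log-ratios and taking expectations coordinatewise under the product law. Since $Q_{\mathrm{pers}}(\cdot\mid w)^{\otimes L}$ and $Q_{\mathrm{pers}}(\cdot\mid w')^{\otimes L}$ are products of $L$ identical copies, the sum reduces to $L$ times the single-observation value from the first step, yielding $L\,\Delta(w,w')^2/(2\sigma^2)$.

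I do not expect a substantive obstacle: both steps are textbook. The one point deserving explicit care is that the cancellation of the log-normalizers, and hence the clean ``squared mean-gap over $2\sigma^2$'' form, relies essentially on equal variances; were the variances to differ, a surviving $\log(\sigma'/\sigma)$ plus a variance-ratio term would prevent the reduction to a scaled squared distance. I would therefore flag the homoscedastic step explicitly so the reader sees that the assumption does real work rather than being cosmetic, and note that the tensorization identity requires only independence (not identical distribution), with the identical-copy structure simply turning the sum into the factor $L$.
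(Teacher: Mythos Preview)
Your proposal is correct and follows essentially the same route as the paper: compute the log-likelihood ratio of two equal-variance Gaussians (the paper writes it as $\frac{(x-y)(2O-x-y)}{2\sigma^2}$, you describe it as linear in $x$ after cancellation of the common normalizers), take expectation under the first Gaussian to obtain $(\mu-\mu')^2/(2\sigma^2)$, and then invoke additivity of KL over product measures for the $L$-fold statement. Your explicit remark that the log-normalizer cancellation is exactly where homoscedasticity is used is a helpful annotation the paper leaves implicit.
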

The right-hand side of Equation \eqref{eq:gaussian_kl_snr} is closely related to the quantity we often call the pairwise signal-to-noise (SNR), which is defined as
\begin{equation}
\mathrm{SNR}(w,w')
\;:= \frac{\Delta(w,w')^2}{2\sigma^2} 
\notag
\end{equation}
This quantity is \textit{ empirically estimable}: it depends only on the first two moments of this kernel $Q$, namely the mean $\mu(w)=\mathbb{E}[o\mid w]$ and variance $\sigma^2(w)=\operatorname{Var}(o \mid w)$. 

\subsection{Sample complexity.}

We define \emph{per-sample $q$-robust discriminability}
of a benchmark as
\begin{align}
\kappa_Q(q):=\sup _{\substack{E \subset \mathcal{S}_r \\ \nu(E) \geq 1-q}} \inf _{\left(w, w^{\prime}\right) \in E} \operatorname{SNR}(w,w') 
\label{eq:disc_SNR}
\end{align}

$\kappa_Q(q)$
has a direct operational interpretation: it is the \emph{per-sample information} available
to distinguish two artifacts that differ by at least $r$ in the benchmark interface.

\begin{lemma}[Pairwise comparison sample complexity from discriminability]
\label{lem:disc_samplecomplexity_temp0} Make the same assumption as in Lemma \ref{lem:gaussian_d_usefulness}. \(L\) is the panel size used inside one benchmark call. Define \(\Delta(w,w'):=\mu_{\mathrm{pers}}(w)-\mu_{\mathrm{pers}}(w')\).

Then for \((W,W')\sim\nu\),
\begin{align}
    &\mathbb{P}\!\left(\{\widehat\mu_{\text{pers}}(W)\le \widehat\mu_{\text{pers}}(W'),\Delta\left(W, W^{\prime}\right)>0\}\right) \notag
    \\
    &\mathbb 
\;\le\;
q+\exp\!\left(-\frac{L}{2}\kappa_Q(q)\right).
\end{align}
In particular, choosing
\[
L \;\ge\; \frac{2}{\kappa_Q(q)}\log\frac{1}{\delta}
\]
gives \(\mathbb P(\widehat\mu_{\text{pers}}(W)\le \widehat\mu_{\text{pers}}(W'),\Delta\left(W, W^{\prime}\right)>0) \le q+\delta\).
\end{lemma}
Appendix \ref{app:exp-textbo} demonstrates an example of the sample complexity analysis for a
prompt-optimization-based self-improving AI system \cite{kang2025bayesian}. 


\subsection{Choice of $r$ and $d_{\mathcal W}$}
\label{ssec:persona_implication}

The definition of discriminability in~\eqref{eq:disc_SNR} depends on two user-specified
design choices: a metric $d_{\mathcal W}$ on the artifact space $\mathcal W_0$ and a resolution
threshold $r>0$. These are generally \emph{method- and task-specific} design parameters:
different method families explore different degrees of freedom in $\mathcal W$ and therefore induce
different natural notions of distance and resolution. Operationally, $d_{\mathcal W}$ and $r$ determine which pairs of artifacts the benchmark is required
to reliably distinguish, and therefore they determine the relevant sample complexity via
Lemma~\ref{lem:disc_samplecomplexity_temp0}.

Below are practical guidelines for selecting them in a way that is both interpretable and robust.

\paragraph{Tie $d_{\mathcal W}$ to the developer's degrees of freedom.}
A good default is to define $d_{\mathcal W}$ via the natural parameterization that methods actually tune.
If artifacts are produced by knobs $\theta\in\Theta$ through $w(\theta)$, and there is a natural distance
$d_\Theta$ on $\Theta$, one can induce a pseudo-metric on $\mathcal W$ by
\[
d_{\mathcal W}(w(\theta),w(\theta')):=d_\Theta(\theta,\theta').
\]
This makes $r$ interpretable as a \emph{step size in the space the method explores}.
Examples include the scaled Euclidean distance on continuous hyperparameters or the edit distance on a structured prompt template.

\paragraph{Choose $r$ as a \emph{minimal meaningful iteration} unit.}
In most benchmarking use cases, there is a natural notion of the smallest ``iteration'' a developer
expects to be worth distinguishing. The guiding principle is that $r$ should be \emph{large enough} that changes below $r$ are not worth spending benchmark
budget on, but \emph{small enough} that improvements developers actually seek fall above $r$"
\begin{itemize}[leftmargin=0.45cm,itemsep=0.5pt]
    \item Prompt/instruction tuning: $r$ can be ``one allowed edit'' under a pre-specified edit set
    (add/remove one constraint, add one example, modify one rubric item). Under token-level edit distance,
    this corresponds to a small fixed number of edits.
    \item Hyperparameter tuning: choose a scaled metric so that a standard ``one-step'' change has size $\approx 1$,
    then set $r=1$. For instance, scale each coordinate by a typical tuning increment.
    \item Model or policy variants: set $r$ to the smallest recipe change you would treat as a distinct method
    (e.g., one additional fine-tuning epoch, one dataset mixture adjustment above a threshold, a decoding rule change).
\end{itemize}

In short, $d_{\mathcal W}$ should encode \emph{meaningful artifact differences} (preferably behavioral and invariant
to cosmetic changes), while $r$ should encode the smallest change that developers intend to reliably resolve.
With these choices fixed, $\kappa_Q$ becomes an operationally estimable quantity, and
Lemma~\ref{lem:disc_samplecomplexity_temp0} translates it directly into the required persona data size for stable method comparison.

\subsection{Experiments}
As a proof-of-concept, Appendix~\ref{app:exp-textbo} demonstrates how our discriminability-based analysis can be used to calibrate the persona dataset size needed for reliable method comparison. Concretely, we apply the procedure to the persona-simulation ad benchmark used to evaluate \textsc{TextBO}, a prompt-optimization based self-improving AI system \cite{kang2025bayesian}, and compute the implied number of independent persona evaluations required to distinguish one-step prompt improvements at a chosen confidence level.


\section{Conclusion}

We characterized when LLM-persona panels can substitute for human field experiments as a \emph{benchmark interface} for method development. Our main result shows that persona vs.\ human evaluation is \emph{just panel change} from the method’s perspective if and only if two benchmark-hygiene conditions hold: (i) \emph{aggregate-only observation} (AO) and (ii) \emph{method-blind evaluation} (MB). When either fails, the benchmark can leak exploitable information or depend on provenance, breaking the interface-level equivalence.

We also separated \emph{validity} from \emph{usefulness}. A valid persona benchmark may still be less informative than field experimentation if the induced aggregate channel is too flat or noisy. Our discriminability $\kappa_Q$ (worst-case KL separation at resolution $r$) yields the corresponding budget scaling for reliable comparisons, on the order of $\kappa_Q^{-1}(q)\log(1/\delta)$. Thus, beyond enforcing (AO)+(MB), the practical requirement is a sufficient persona dataset.

\bibliography{example_paper}
\bibliographystyle{icml2026}

\appendix

\onecolumn

\section{Deferred theoretical discussions}\label{ssec:AppendixTheory}

\subsection{Proof of Lemma \ref{lem:literal_implies_jpc} and \ref{lem:jpc_implies_literal_representation} in Section \ref{sec:identification}}

\begin{proof}[Proof of Lemma \ref{lem:literal_implies_jpc}]
Fix any method $\mathcal A$ with submission kernels $\pi_t(\cdot\mid \mathcal I_{t-1})$ under (AO), and fix a round $t$.
Under benchmark $B=(P,I,\Gamma,L)$, conditional on the submitted artifact $w_t$, the benchmark generates
\[
p_{t,1},\dots,p_{t,L}\overset{i.i.d.}{\sim}P,\qquad
Z_{t,\ell}\sim I(\cdot\mid w_t,p_{t,\ell})\ \text{independently over }\ell,
\qquad
o_t=\Gamma(Z_{t,1},\dots,Z_{t,L}).
\]
By construction, given $w_t$ this sampling uses only fresh benchmark randomness (fresh panel draw and micro-responses)
and therefore does not depend on $\mathcal I_{t-1}$ nor on the identity of $\mathcal A$.
Hence for every measurable $A\subseteq\mathcal O$,
\[
\mathbb P_B^{\mathcal A}(o_t\in A\mid \mathcal I_{t-1},w_t)=Q_{P,I}(A\mid w_t)
\quad\text{a.s.}
\]
This is exactly the (MB)-type conditional independence statement with kernel $Q_{P,I}$.
Combining this with the fact that $w_t\sim \pi_t(\cdot\mid \mathcal I_{t-1})$ under (AO), the standard
sequential composition / chain rule for Markov kernels yields, for every horizon $T$,
\[
\mathbb P^{\mathcal A}_{B}(dw_{1:T},do_{1:T})
=
\prod_{t=1}^T \pi_t(dw_t\mid \mathcal I_{t-1}) \, Q_{P,I}(do_t\mid w_t).
\]
The same argument for $B'=(P',I,\Gamma,L)$ gives
\[
\mathbb P^{\mathcal A}_{B'}(dw_{1:T},do_{1:T})
=
\prod_{t=1}^T \pi_t(dw_t\mid \mathcal I_{t-1}) \, Q_{P',I}(do_t\mid w_t).
\]
This is precisely Definition~\ref{def:jpc_interface} with $Q:=Q_{P,I}$ and $Q':=Q_{P',I}$.
\end{proof}

\begin{proof}[Proof of Lemma \ref{lem:jpc_implies_literal_representation}]
We construct an \emph{auxiliary} panel--instrument--aggregation representation that reproduces the same transcript laws.
This construction is not meant to mirror the internal structure of the original human/persona protocols. Let
\begin{align}
    \overline{\mathcal P}&:=\{0,1\},
\overline{\mathcal Z}:=\mathcal O,\
\overline L:=1, \notag
\\
\overline\Gamma&:\overline{\mathcal Z}\to\mathcal O\ \text{be the identity map } \overline\Gamma(o)=o. \notag
\end{align}
Define a \emph{single} micro-instrument $\overline I$ on $\overline{\mathcal Z}=\mathcal O$ by
\[
\overline I(\cdot\mid w,0):=Q(\cdot\mid w),
\qquad
\overline I(\cdot\mid w,1):=Q'(\cdot\mid w).
\]
Now define the two panel distributions
\[
\overline P:=\delta_0,\qquad \overline P':=\delta_1.
\]
Then $\overline B:=(\overline P,\overline I,\overline\Gamma,\overline L)$ and
$\overline B':=(\overline P',\overline I,\overline\Gamma,\overline L)$ differ \emph{only} in the panel distribution
($\delta_0$ versus $\delta_1$), hence are a literal panel change.

Moreover, under $\overline B$, each benchmark call on $w$ samples $p=0$ a.s., then outputs $o\sim \overline I(\cdot\mid w,0)=Q(\cdot\mid w)$.
Thus the reduced-form kernel of $\overline B$ is exactly $Q$. Similarly, the reduced-form kernel of $\overline B'$ is exactly $Q'$.
Therefore, for every method $\mathcal A$ and horizon $T$, the transcript laws under $\overline B$ and $\overline B'$ satisfy
the same factorizations as in the displayed JPC equations, which implies
\begin{align}
   &\mathbb P^{\mathcal A}_{\overline B}(dw_{1:T},do_{1:T}) \notag
\\
&=
\prod_{t=1}^T \pi_t(dw_t\mid \mathcal I_{t-1})\,Q(do_t\mid w_t)
=
\mathbb P^{\mathcal A}_{B}(dw_{1:T},do_{1:T}), \notag
\end{align}
and likewise $\mathbb P^{\mathcal A}_{\overline B'}=\mathbb P^{\mathcal A}_{B'}$.
\end{proof}

\subsection{Proof of Lemma \ref{lem:main_new} in Section \ref{sec:identification}}
\label{ssec:mainthmproof}

\begin{proof}[Proof of Lemma~\ref{lem:main_new}]
We prove $(2)\Rightarrow(1)$ and $(1)\Rightarrow(2)$.

\smallskip
\noindent\textbf{$(2)\Rightarrow(1)$: (AO)+(MB) imply JPC.}
Assume (AO) holds and each benchmark is method-blind (MB) in the sense of Assumptions~\ref{ass:AO}--\ref{ass:MB}.
Apply Lemma~\ref{lem:factorization_from_MB} to the human benchmark $B_{\mathrm{hum}}$ to obtain a kernel
$Q_{\mathrm{hum}}:\mathcal W\mapsto \Delta\mathcal O$ such that for every method $\mathcal A$ and horizon $T$,
\[
\mathbb P^{\mathcal A}_{B_{\mathrm{hum}}}(dw_{1:T},do_{1:T})
=
\prod_{t=1}^T \pi_t(dw_t\mid \mathcal I_{t-1}) \, Q_{\mathrm{hum}}(do_t\mid w_t).
\]
Likewise apply Lemma~\ref{lem:factorization_from_MB} to $B_{\mathrm{pers}}$ to obtain
$Q_{\mathrm{pers}}$ with
\[
\mathbb P^{\mathcal A}_{B_{\mathrm{pers}}}(dw_{1:T},do_{1:T})
=
\prod_{t=1}^T \pi_t(dw_t\mid \mathcal I_{t-1}) \, Q_{\mathrm{pers}}(do_t\mid w_t).
\]
Setting $Q:=Q_{\mathrm{hum}}$ and $Q':=Q_{\mathrm{pers}}$ verifies Definition~\ref{def:jpc_interface}.
Hence the swap is JPC on the method interface.

\smallskip
\noindent\textbf{$(1)\Rightarrow(2)$: JPC implies (AO)+(MB).}
Assume the swap is JPC in the sense of Definition~\ref{def:jpc_interface}.
By that definition, the transcript laws of both benchmarks factorize for every method $\mathcal A$ and every horizon $T$
through some kernels $Q$ and $Q'$.

First, (AO) is the observation/information structure assumed in Definition~\ref{def:jpc_interface}
(the method's interaction is summarized by the aggregate-history $\mathcal I_{t-1}$ and transcript $(w_t,o_t)$).
Second, applying Lemma~\ref{lem:MB_from_factorization} to $B_{\mathrm{hum}}$ and $B_{\mathrm{pers}}$ shows that each benchmark
satisfies (MB) with kernels $Q_{\mathrm{hum}}=Q$ and $Q_{\mathrm{pers}}=Q'$ respectively.

Therefore, (AO)+(MB) hold.
\end{proof}

\begin{lemma}[Transcript factorization under (AO)+(MB)]
\label{lem:factorization_from_MB}
Fix a benchmark $B$ with artifact space $W$ and feedback space $O$.
Assume (AO) and (MB) hold for $B$, i.e., there exists a Markov kernel
$Q_B: W \to \Delta(O)$ such that for every method $A$, every round $t$,
and every measurable $A \subseteq O$,

\begin{equation}
\mathbb{P}^A_B(o_t \in A \mid I_{t-1}, w_t) = Q_B(A \mid w_t) \quad \text{a.s.} 
\label{eq:MB_again}
\end{equation}
Then for every method $A$ (with submission kernels $\pi_t(\cdot \mid I_{t-1})$ under (AO))
and every horizon $T$,
\begin{equation}
\mathbb{P}^A_B(dw_{1:T}, do_{1:T})
= \prod_{t=1}^T \pi_t(dw_t \mid I_{t-1})\, Q_B(do_t \mid w_t),
\label{eq:factorization_B}
\end{equation}
\end{lemma}

\begin{proof}[Proof of Lemma \ref{lem:factorization_from_MB}]
Fix a method $A$. Under (AO), at each round $t$ the method chooses
$w_t \sim \pi_t(\cdot \mid I_{t-1})$.
By (MB) in (12), conditional on $(I_{t-1}, w_t)$ the benchmark draw satisfies
$o_t \sim Q_B(\cdot \mid w_t)$.
Therefore, conditional on $I_{t-1}$, the pair $(w_t,o_t)$ is generated by
\[
w_t \sim \pi_t(\cdot \mid I_{t-1}), \qquad o_t \sim Q_B(\cdot \mid w_t).
\]
Iterating over $t=1,\dots,T$ and applying the standard chain rule / sequential
composition for Markov kernels yields Equation \eqref{eq:factorization_B}.
\end{proof}

\begin{lemma}[Factorization implies (MB)]
\label{lem:MB_from_factorization}
Fix a benchmark $B$.
Suppose that under (AO), there exists a Markov kernel $Q_B:\mathcal W\mapsto\Delta\mathcal O$
such that for every method $\mathcal A$ and every horizon $T$,
\[
\mathbb P^{\mathcal A}_B(dw_{1:T},do_{1:T})
=
\prod_{t=1}^T \pi_t(dw_t\mid \mathcal I_{t-1}) \, Q_B(do_t\mid w_t).
\]
Then $B$ satisfies (MB) in the sense of Assumption~\ref{ass:MB}; namely, for every method $\mathcal A$, every $t$,
and measurable $A\subseteq \mathcal O$,
\[
\mathbb P^{\mathcal A}_B(o_t\in A \mid I_{t-1}, w_t) = Q_B(A\mid w_t)
\quad\text{a.s.}
\]
In particular, by tower property, this implies $\mathbb{P}_B^A\left(o_t \in\right. \left.A \mid H_{t-1}, w_t\right)=Q_B\left(A \mid w_t\right)\;\text{a.s.}$ as well.
\end{lemma}

\begin{proof}[Proof of Lemma \ref{lem:MB_from_factorization}]
Fix $\mathcal A,t$ and a measurable $A\subseteq \mathcal O$.

Under the assumed factorization, conditional on $\mathcal I_{t-1}$ and $w_t$, the draw $o_t$ is generated by
$Q_B(\cdot\mid w_t)$, so
\[
\mathbb P^{\mathcal A}_B(o_t\in A \mid \mathcal I_{t-1}, w_t) = Q_B(A\mid w_t)\quad\text{a.s.}
\]
Now apply the tower property conditioning down to $(H_{t-1},w_t)$:
\begin{align*}
\mathbb P^{\mathcal A}_B(o_t\in A \mid H_{t-1}, w_t)
&=
\mathbb E^{\mathcal A}_B\!\left[
\mathbb P^{\mathcal A}_B(o_t\in A \mid \mathcal I_{t-1}, w_t)
\;\middle|\; H_{t-1}, w_t\right]\\
&=
\mathbb E^{\mathcal A}_B\!\left[ Q_B(A\mid w_t)\mid H_{t-1}, w_t\right]\\
&= Q_B(A\mid w_t),
\end{align*}
since $Q_B(A\mid w_t)$ is $\sigma(w_t)$-measurable.
This is exactly (MB).
\end{proof}

\subsection{Proofs in Section \ref{sec:usefulness}}

\begin{proof} [Proof of Lemma \ref{lem:gaussian_d_usefulness}.]
Let $P_x$ denote the density of $\mathcal N(x,\sigma^2)$:
\[
p_x(o)=\frac{1}{\sqrt{2\pi\sigma^2}}\exp\!\left(-\frac{(o-x)^2}{2\sigma^2}\right).
\]
By definition,
\[
d(x,y)=\mathbb E_{O\sim \mathcal N(x,\sigma^2)}\!\left[\log\frac{p_x(O)}{p_y(O)}\right].
\]
Compute the log-likelihood ratio:
\begin{align*}
\log\frac{p_x(O)}{p_y(O)}
&=
-\frac{(O-x)^2}{2\sigma^2}+\frac{(O-y)^2}{2\sigma^2}
\\
&=
\frac{(O-y)^2-(O-x)^2}{2\sigma^2}
\\
&=
\frac{(x-y)\big(2O-x-y\big)}{2\sigma^2}.
\end{align*}
Taking expectation under $O\sim\mathcal N(x,\sigma^2)$ yields
\begin{align}
   d(x,y)  &=\frac{x-y}{2\sigma^2}\,\mathbb E\big[2O-x-y\big]\notag\\
    &
=\frac{x-y}{2\sigma^2}\,(2x-x-y)
=\frac{(x-y)^2}{2\sigma^2},\notag
\end{align}
\end{proof}

\begin{proof}[Proof of Lemma \ref{lem:disc_samplecomplexity_temp0}]
Fix any \((w,w')\in\mathcal S_r\) and abbreviate \(\Delta:=\Delta(w,w')=\mu_{\mathrm{pers}}(w)-\mu_{\mathrm{pers}}(w')\).
Under the Gaussian reduced-form assumption (as in Lemma~\ref{lem:gaussian_d_usefulness}),
a single benchmark call with panel size \(L\) returns
\[
\widehat\mu_{\mathrm{pers}}(w)\sim \mathcal N\!\left(\mu_{\mathrm{pers}}(w),\frac{\sigma^2}{L}\right),
\qquad
\widehat\mu_{\mathrm{pers}}(w')\sim \mathcal N\!\left(\mu_{\mathrm{pers}}(w'),\frac{\sigma^2}{L}\right),
\]
independently (given \(w,w'\)). Hence
\[
\widehat\Delta \;:=\; \widehat\mu_{\mathrm{pers}}(w)-\widehat\mu_{\mathrm{pers}}(w')
\sim \mathcal N\!\left(\Delta,\frac{2\sigma^2}{L}\right).
\]
If \(\Delta\le 0\), then
\(\mathbb P(\widehat\mu_{\mathrm{pers}}(w)\le \widehat\mu_{\mathrm{pers}}(w'),\,\Delta>0\mid w,w')=0\).
Assume \(\Delta>0\). Then the event
\(\{\widehat\mu_{\mathrm{pers}}(w)\le \widehat\mu_{\mathrm{pers}}(w')\}\) is \(\{\widehat\Delta\le 0\}\).
For any \(t>0\), Markov's inequality gives
\[
\mathbb P(\widehat\Delta\le 0)
=
\mathbb P\!\left(e^{-t\widehat\Delta}\ge 1\right)
\le
\mathbb E\!\left[e^{-t\widehat\Delta}\right]
=
\exp\!\left(-t\Delta+\frac{t^2}{2}\cdot\frac{2\sigma^2}{L}\right)
=
\exp\!\left(-t\Delta+\frac{t^2\sigma^2}{L}\right).
\]
Optimizing over \(t\) yields \(t^\star=\frac{L\Delta}{2\sigma^2}\), hence
\[
\mathbb P(\widehat\Delta\le 0)
\le
\exp\!\left(-\frac{L\Delta^2}{4\sigma^2}\right)
=
\exp\!\left(-\frac{L}{2}\cdot \frac{\Delta^2}{2\sigma^2}\right).
\]
Recalling \(\mathrm{SNR}(w,w'):=\frac{\Delta(w,w')^2}{2\sigma^2}\), we obtain the conditional bound
\[
\mathbb P\!\left(\widehat\mu_{\mathrm{pers}}(w)\le \widehat\mu_{\mathrm{pers}}(w'),\,\Delta(w,w')>0 \,\middle|\, w,w'\right)
\le
\exp\!\left(-\frac{L}{2}\,\mathrm{SNR}(w,w')\right).
\]

Now draw \((W,W')\sim\nu\) and let \(V:=\mathrm{SNR}(W,W')\). Taking expectations,
\[
\mathbb P\!\left(\widehat\mu_{\mathrm{pers}}(W)\le \widehat\mu_{\mathrm{pers}}(W'),\,\Delta(W,W')>0\right)
\le
\mathbb E\!\left[\exp\!\left(-\frac{L}{2}V\right)\right].
\]
Let \(\mathcal B:=\{V<\kappa_Q(q)\}\) and \(\mathcal G:=\{V\ge \kappa_Q(q)\}\).
By the definition of \(\kappa_Q(q)\) as the \(q\)-quantile / \(q\)-robust discriminability of \(V\)
(cf.~\eqref{eq:disc_SNR} and its quantile-equivalent form), we have \(\nu(\mathcal B)\le q\).
Therefore,
\begin{align*}
\mathbb E\!\left[e^{-\frac{L}{2}V}\right]
&=
\mathbb E\!\left[e^{-\frac{L}{2}V}\mathbf 1_{\mathcal B}\right]
+
\mathbb E\!\left[e^{-\frac{L}{2}V}\mathbf 1_{\mathcal G}\right]\\
&\le
\nu(\mathcal B)\cdot 1
+
e^{-\frac{L}{2}\kappa_Q(q)}\nu(\mathcal G)
\;\le\;
q+e^{-\frac{L}{2}\kappa_Q(q)}.
\end{align*}
This proves the claimed bound.

Finally, if \(L \ge \frac{2}{\kappa_Q(q)}\log\frac{1}{\delta}\), then
\(\exp(-\frac{L}{2}\kappa_Q(q))\le \delta\), so
\[
\mathbb P\!\left(\widehat\mu_{\mathrm{pers}}(W)\le \widehat\mu_{\mathrm{pers}}(W'),\,\Delta(W,W')>0\right)
\le q+\delta.
\]
\end{proof}

\newpage

\section{Extended discussions}

\subsection{AO and JPC - a counterexample}
\label{ssec:AO_counterexample}

The equivalence in Lemma~\ref{lem:main_new} highlights that (AO) is not merely technical:
if micro-level information leaks (raw votes, rater identities, ordering, etc.), then two
evaluation pipelines can induce the \emph{same} aggregate kernel $Q(\cdot\mid w)$ yet still be
distinguishable (and exploitable) by an adaptive method. The following minimal construction
makes this necessity direction concrete.

\begin{proposition}[Violating (AO) can break JPC even when aggregate kernels match]
\label{prop:AO_counterexample}
There exist two benchmarks $B$ and $B'$ that induce the \emph{same} reduced-form kernel
$Q(\cdot\mid w)$ on the aggregate feedback space $\mathcal O$, but such that if the benchmark
reveals the raw vote vector $(Z_1,\dots,Z_L)$ to the method (violating (AO)), then there is an
adaptive method $\mathcal A$ whose aggregate transcript law
$\mathbb P^{\mathcal A}(w_{1:T},o_{1:T})$ differs between $B$ and $B'$.
Consequently, the swap cannot be ``just panel change'' on the method interface as in
Definition~\ref{def:jpc_interface}.
\end{proposition}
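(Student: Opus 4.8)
The plan is to exhibit two benchmarks that agree at the level of the aggregate channel $Q(\cdot\mid w)$ but disagree at the level of the joint law of the raw micro-responses, and then to build a one-step-adaptive algorithm that reads the raw votes and thereby produces a different \emph{aggregate} transcript law under the two benchmarks. Since the two aggregate kernels coincide, any JPC representation would be forced to use a common channel $Q=Q'$ in Definition~\ref{def:jpc_interface}, so the two transcript laws would be identical; a single observed discrepancy therefore contradicts JPC. The argument is thus: construct matching aggregates with mismatched joints, convert the joint mismatch into a transcript-level mismatch via a raw-vote-dependent submission, and close by contradiction.

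For the construction I would fix $\mathcal Z=\{0,1\}$, $L=3$, and take $\Gamma$ to be majority vote, so $\mathcal O=\{0,1\}$. In benchmark $B$ the three micro-responses are drawn i.i.d.\ $\mathrm{Bernoulli}(1/2)$; in benchmark $B'$ a single fair coin $U$ is drawn and all three votes are set equal to $U$ (perfectly correlated). Both induce $\mathrm{maj}\sim\mathrm{Bernoulli}(1/2)$ for every submitted $w$, so $Q_B(\cdot\mid w)=Q_{B'}(\cdot\mid w)$ on $\mathcal O$; one checks this from $P_B(\mathrm{maj}=1)=\binom{3}{2}2^{-3}+\binom{3}{3}2^{-3}=\tfrac12=P_{B'}(\mathrm{maj}=1)$. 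If $w$-dependence is desired, set the per-vote parameter to $p(w)$ in $B$ and the common-coin parameter to $q(w)=3p(w)^2-2p(w)^3$ in $B'$, which equalizes the majority law pointwise. The joint laws of $(Z_1,Z_2,Z_3)$ are manifestly different: the event $E=\{Z_1=Z_2=Z_3\}$ has probability $1/4$ under $B$ but $1$ under $B'$.

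Next I would specify the algorithm $\mathcal A$ operating in the AO-violated interface, where the round-$t$ observation is the raw vector $(Z_{t,1},\dots,Z_{t,L})$ rather than $o_t=\Gamma(\cdot)$. At round $1$ it submits a fixed $w_1$; it then reads the raw votes, forms the indicator $\mathbf 1\{E\}$, and submits $w_2=a$ if $E$ occurs and $w_2=b$ otherwise. Because $E$ is not a function of the aggregate $o_1$, the marginal law of $w_2$ separates the benchmarks: $\mathbb P^{\mathcal A}_B(w_2=a)=1/4$ while $\mathbb P^{\mathcal A}_{B'}(w_2=a)=1$. Since $w_2$ is recorded in the aggregate transcript $(w_{1:2},o_{1:2})$, the transcript laws differ.

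Finally I would close by contradiction. If the swap were JPC, Definition~\ref{def:jpc_interface} would supply kernels $Q,Q'$ with both transcript laws factoring through the common submission kernels $\prod_t \pi_t(dw_t\mid\mathcal I_{t-1})$ and $Q,Q'$ respectively; matching the round-$t$ conditional law of $o_t$ given $w_t$ forces $Q=Q_B$ and $Q'=Q_{B'}$, hence $Q=Q'$, hence $\mathbb P^{\mathcal A}_B=\mathbb P^{\mathcal A}_{B'}$ on $(w_{1:T},o_{1:T})$ — contradicting the computed gap in the law of $w_2$. I expect the main obstacle to be bookkeeping rather than algebra: I must verify that the raw-vote-dependent branching changes a coordinate that is actually recorded in the aggregate transcript (here $w_2$), since a discrepancy confined to hidden internal variables would not witness a JPC failure. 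The majority-vote construction is chosen precisely so that $E$ is invisible to the aggregate $\sigma$-field $\mathcal I_1$ yet deterministically controls an observed submission.
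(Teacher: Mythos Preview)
Your strategy --- match aggregate kernels, mismatch micro-level joints, then branch $w_2$ on a raw-vote event invisible to the aggregate $\sigma$-field --- is exactly the paper's, and your closing contradiction (JPC with $Q_B=Q_{B'}$ forces $Q=Q'$ and hence identical aggregate transcripts for every algorithm) is also the paper's closing move.

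The gap is in your instantiation of $B'$. In the paper's panel--instrument--aggregation model (Section~\ref{ssec:panel-instrument}), panelists are drawn i.i.d.\ and each $Z_\ell\sim I(\cdot\mid w,p_\ell)$ independently across $\ell$, so $(Z_1,\dots,Z_L)$ are \emph{necessarily i.i.d.\ conditional on $w$}. Your $B'$ ties all three votes to a single shared coin $U$, which requires cross-$\ell$ dependence that no $(P,I,\Gamma,L)$ tuple can produce; it is therefore not a benchmark in the sense the proposition quantifies over. More damagingly, within this i.i.d.\ model any two benchmarks that share the marginal law of a single vote $Z_1\mid w$ automatically share the full joint law of $(Z_1,\dots,Z_L)\mid w$, and your $B$ and $B'$ both have $Z_1\sim\mathrm{Bernoulli}(1/2)$ marginally --- so if you force $B'$ back into the model, the raw-vote distinction you rely on disappears entirely.

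The paper resolves this by keeping both benchmarks i.i.d.\ and instead choosing an aggregator with a nontrivial symmetry in the \emph{marginal} vote law: with $L=2$ and $\Gamma(z_1,z_2)=z_1\oplus z_2$, i.i.d.\ $\mathrm{Bernoulli}(p_w)$ votes and i.i.d.\ $\mathrm{Bernoulli}(1-p_w)$ votes both yield $o\sim\mathrm{Bernoulli}\big(2p_w(1-p_w)\big)$, yet the single-vote marginals differ (mean $p_w$ versus $1-p_w$). The algorithm then sets $w_2:=Z_{1,1}$, so $\mathbb P(w_2=1)$ equals $p_0$ under $B$ and $1-p_0$ under $B'$, and this propagates to a concrete numerical discrepancy in $\mathbb P(o_2=1)$. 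Your correlation-based separation would be valid in a model allowing dependent micro-responses, but not in the one the proposition is stated for.
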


\begin{proof}
We specify two benchmarks and an adaptive method.

\paragraph{Spaces and aggregation.}
Let $\mathcal W=\{0,1\}$, $\mathcal Z=\{0,1\}$, $\mathcal O=\{0,1\}$, and take panel size $L=2$.
Let the aggregation map be the XOR (disagreement) statistic
\[
\Gamma(z_1,z_2):=z_1\oplus z_2\in\{0,1\}.
\]
Thus the aggregate output is $o=1$ iff the two individual votes disagree.

\paragraph{Two micro-instruments with identical aggregate kernels.}
Let $p_0:=0.1$ and $p_1:=0.4$. Define benchmark $B$ so that, for each submitted artifact $w\in\{0,1\}$,
the two micro-votes are independent Bernoulli draws
\[
Z_1,Z_2 \overset{i.i.d.}{\sim} \mathrm{Bernoulli}(p_w).
\]
Define benchmark $B'$ identically except that each vote is \emph{flipped} in distribution:
\[
Z'_1,Z'_2 \overset{i.i.d.}{\sim} \mathrm{Bernoulli}(1-p_w).
\]
(Panel sampling is irrelevant here; one can take a degenerate panel distribution and absorb everything
into the micro-instrument.)

Now compute the induced reduced-form kernels on $\mathcal O$. Under $B$,
\[
\mathbb P_B(o=1\mid w)
=\mathbb P_B(Z_1\neq Z_2\mid w)
=2p_w(1-p_w).
\]
Under $B'$,
\[
\mathbb P_{B'}(o=1\mid w)
=\mathbb P_{B'}(Z'_1\neq Z'_2\mid w)
=2(1-p_w)p_w
=2p_w(1-p_w).
\]
Hence the aggregate channels coincide exactly:
\[
Q_B(\cdot\mid w)=Q_{B'}(\cdot\mid w)\qquad\forall\,w\in\mathcal W.
\]
Concretely, $Q(o=1\mid 0)=2(0.1)(0.9)=0.18$ and $Q(o=1\mid 1)=2(0.4)(0.6)=0.48$ for both benchmarks.

\paragraph{AO violation (raw-vote leakage) and an adaptive distinguisher.}
Now suppose (AO) is violated and the benchmark releases the raw vote vector
$(Z_1,Z_2)$ (or $(Z'_1,Z'_2)$) to the method in addition to the aggregate $o$.
Consider the following horizon-$T=2$ adaptive method $\mathcal A$:
\begin{itemize}[leftmargin=0.45cm,itemsep=2pt]
\item Round 1: submit $w_1=0$.
\item Observe the \emph{first} raw vote and set $w_2 := Z_{1,1}$ (i.e., the round-1 vote of evaluator 1).
\end{itemize}
This is a valid adaptive strategy under the leaked interface, but it is \emph{not} measurable with respect
to the aggregate-only history $(w_1,o_1)$.

Under benchmark $B$, $\mathbb P(w_2=1)=\mathbb P(Z_{1,1}=1\mid w_1=0)=p_0=0.1$.
Under benchmark $B'$, $\mathbb P(w_2=1)=\mathbb P(Z'_{1,1}=1\mid w_1=0)=1-p_0=0.9$.
Therefore the distribution of the second-round submission $w_2$ differs between the two benchmarks even though
the aggregate kernel $Q(\cdot\mid w)$ is identical.

Because $o_2\sim Q(\cdot\mid w_2)$ in both benchmarks, this also induces a difference in the aggregate outcome at round 2:
\begin{align*}
\mathbb P_B(o_2=1)
&=\mathbb P_B(w_2=1)\,Q(o=1\mid 1)+\mathbb P_B(w_2=0)\,Q(o=1\mid 0) \\
&=0.1\cdot 0.48 + 0.9\cdot 0.18
=0.21,
\\
\mathbb P_{B'}(o_2=1)
&=0.9\cdot 0.48 + 0.1\cdot 0.18
=0.45.
\end{align*}
Thus the aggregate transcript laws $\mathbb P^{\mathcal A}_B(w_{1:2},o_{1:2})$ and
$\mathbb P^{\mathcal A}_{B'}(w_{1:2},o_{1:2})$ differ.

\paragraph{Conclusion.}
If the swap $B\leftrightarrow B'$ were JPC on the aggregate-only method interface in the sense of
Definition~\ref{def:jpc_interface}, then (since $Q_B=Q_{B'}$) the factorization would force
\emph{every} method's aggregate transcript law to agree under $B$ and $B'$.
The above $\mathcal A$ contradicts this as soon as raw votes are released.
Therefore leaking micro-level information (violating (AO)) can break JPC even when the aggregate kernels match.
\end{proof}

The same phenomenon occurs if the benchmark releases panel identities (or stable rater IDs):
even if the aggregate score distribution $Q(\cdot\mid w)$ is unchanged, the extra identifier acts as a
side channel that an adaptive method can condition on, producing different submission sequences and
hence different aggregate transcripts. This is exactly why (AO) is a \emph{benchmark-hygiene} requirement:
it rules out side channels through which the method can distinguish evaluator implementations that are
otherwise identical at the aggregate level.

\subsection{Practical realism of method-blind evaluation (MB)}
\label{sec:practical_ab}

Assumption~\ref{ass:MB} is a \emph{protocol} requirement: conditional on the submitted artifact $w$, the
distribution of the returned aggregate feedback must not depend on who submitted $w$ or how it was
produced. This is intentionally stronger than what one gets ``by default'' in many human-judgment
settings, because there is extensive evidence that evaluations can shift when raters are exposed to
provenance cues (e.g., explicit labels such as ``AI-generated'' vs.\ ``human-written'') or other contextual
metadata beyond the content being judged \cite{bogert2022human,osborne2025me,levitt2011hawthorne,dequidt2018demand}.
In our framework, such provenance dependence is precisely an (MB) violation: two identical artifacts $w$
can induce different score distributions if the evaluator observes additional information correlated with
the producing method.\footnote{If provenance cues are literally part of the submitted artifact (e.g.,
the text contains ``as an AI language model''), then any resulting penalty is \emph{not} an (MB) violation,
because it is a function of $w$ itself. The problematic case for (MB) is when the evaluator is shown
extra metadata (model name, submitter identity, method label, timestamp-based context, etc.) that is
\emph{not} a function of $w$.}

\paragraph{How realistic is (MB) in practice?}
(MB) is best viewed as a \emph{design target} that is often approximately achievable, but not automatic.
It is most realistic in settings where the outcome is behavioral and passively recorded (classic online
A/B tests), since users are typically not told which method produced what they see and outcomes like
click-through or conversion are not direct subjective judgments \cite{kohavi2020trustworthy}.
By contrast, in explicit rating / preference-judgment pipelines (crowd or expert), (MB) is fragile:
even minimal provenance cues (labels, branding, prior beliefs about model quality, expectations about
``humans vs.\ AI'') can create systematic shifts in ratings for the \emph{same} artifact, violating (MB).
This fragility is also consistent with the motivation for anonymous pairwise evaluation interfaces used
in large-scale leaderboards \cite{chiang2024chatbotarena} and with documented ordering/labeling effects in
LLM-judge-style protocols \cite{dominguezolmedo2024questioning}.

The key takeaway is that (MB) should not be read as an empirical claim about human invariance; it is a
\emph{benchmark hygiene constraint} that must be enforced (or at least audited) by the benchmark
implementation.

\subsection{Extension to heteroscedastic Gaussian reduced-form kernels}
\label{ssec:heteroscedastic_gaussian_usefulness}

Lemma~\ref{lem:gaussian_d_usefulness} assumes a homoscedastic Gaussian reduced-form kernel, i.e.,
$\mathrm{Var}(o\mid w)$ is constant across artifacts.
In many benchmarks, however, the aggregate score variance depends on the submitted artifact:
some artifacts elicit highly consistent micro-responses (low noise), while others are ambiguous or polarizing
(high noise). This motivates the more general \emph{heteroscedastic} Gaussian model
\begin{equation}
Q_{\mathrm{pers}}(\cdot\mid w)=\mathcal N\!\big(\mu_{\mathrm{pers}}(w),\,\sigma_{\mathrm{pers}}^2(w)\big),
\qquad \sigma_{\mathrm{pers}}(w)>0.
\label{eq:hetero_gaussian_channel_assump_usefulness}
\end{equation}
(Here $\sigma_{\mathrm{pers}}^2(w)$ denotes the variance of the \emph{aggregate} feedback draw $o\sim Q_{\mathrm{pers}}(\cdot\mid w)$.
Any micro-level panel size has already been absorbed into this reduced-form variance.)

\begin{lemma}[KL divergence under heteroscedastic Gaussians]
\label{lem:hetero_gaussian_kl_usefulness}
Assume $\mathcal O=\mathbb R$ and~\eqref{eq:hetero_gaussian_channel_assump_usefulness}. Then for any $w,w'\in\mathcal W$,
\begin{align}
&D_{\mathrm{KL}}\!\big(Q_{\mathrm{pers}}(\cdot\mid w)\,\big\|\,Q_{\mathrm{pers}}(\cdot\mid w')\big)
\notag\\
&=\frac{1}{2}\Bigg[
\log\frac{\sigma_{\mathrm{pers}}^2(w')}{\sigma_{\mathrm{pers}}^2(w)}
+\frac{\sigma_{\mathrm{pers}}^2(w)}{\sigma_{\mathrm{pers}}^2(w')}
-1
\Bigg]
\;+\;
\frac{\big(\mu_{\mathrm{pers}}(w)-\mu_{\mathrm{pers}}(w')\big)^2}{2\,\sigma_{\mathrm{pers}}^2(w')}.
\label{eq:hetero_gaussian_kl_closed_form}
\end{align}

\end{lemma}

Equation~\eqref{eq:hetero_gaussian_kl_closed_form} highlights two separable sources of
distributional distinguishability:
(i) a \emph{variance-mismatch} term (the bracketed expression), which is zero iff
$\sigma_{\mathrm{pers}}^2(w)=\sigma_{\mathrm{pers}}^2(w')$, and
(ii) a \emph{mean-separation} term, which scales the squared mean gap by $1/\sigma_{\mathrm{pers}}^2(w')$.
In contrast to the homoscedastic case, the KL divergence is generally \emph{asymmetric}:
$D_{\mathrm{KL}}(Q(\cdot\mid w)\|Q(\cdot\mid w'))\neq D_{\mathrm{KL}}(Q(\cdot\mid w')\|Q(\cdot\mid w))$ when
$\sigma_{\mathrm{pers}}^2(w)\neq\sigma_{\mathrm{pers}}^2(w')$.

\begin{proof}[Proof of Lemma~\ref{lem:hetero_gaussian_kl_usefulness}]
Fix $w,w'\in\mathcal W$ and abbreviate
\[
\mu:=\mu_{\mathrm{pers}}(w),\quad \mu':=\mu_{\mathrm{pers}}(w'),\quad 
\sigma^2:=\sigma_{\mathrm{pers}}^2(w),\quad \tau^2:=\sigma_{\mathrm{pers}}^2(w').
\]
Let $P:=\mathcal N(\mu,\sigma^2)$ and $Q:=\mathcal N(\mu',\tau^2)$ with Lebesgue densities
\[
p(o)=\frac{1}{\sqrt{2\pi\sigma^2}}\exp\!\left(-\frac{(o-\mu)^2}{2\sigma^2}\right),
\qquad
q(o)=\frac{1}{\sqrt{2\pi\tau^2}}\exp\!\left(-\frac{(o-\mu')^2}{2\tau^2}\right).
\]
By definition,
\[
D_{\mathrm{KL}}(P\|Q)=\mathbb E_{O\sim P}\!\left[\log\frac{p(O)}{q(O)}\right].
\]
Compute the log-likelihood ratio:
\begin{align*}
\log\frac{p(O)}{q(O)}
&=
\left(-\frac{1}{2}\log(2\pi\sigma^2)-\frac{(O-\mu)^2}{2\sigma^2}\right)
-\left(-\frac{1}{2}\log(2\pi\tau^2)-\frac{(O-\mu')^2}{2\tau^2}\right)\\
&=
\frac{1}{2}\log\frac{\tau^2}{\sigma^2}
+\frac{(O-\mu')^2}{2\tau^2}
-\frac{(O-\mu)^2}{2\sigma^2}.
\end{align*}
Taking expectations under $O\sim\mathcal N(\mu,\sigma^2)$, we use the identities
\[
\mathbb E[(O-\mu)^2]=\sigma^2,
\qquad
\mathbb E[(O-\mu')^2]=\mathrm{Var}(O)+(\mathbb E[O]-\mu')^2=\sigma^2+(\mu-\mu')^2.
\]
Hence
\begin{align*}
D_{\mathrm{KL}}(P\|Q)
&=
\frac{1}{2}\log\frac{\tau^2}{\sigma^2}
+\frac{1}{2\tau^2}\Big(\sigma^2+(\mu-\mu')^2\Big)
-\frac{1}{2\sigma^2}\sigma^2\\
&=
\frac{1}{2}\left[
\log\frac{\tau^2}{\sigma^2}+\frac{\sigma^2}{\tau^2}-1
\right]
+\frac{(\mu-\mu')^2}{2\tau^2}.
\end{align*}
Substituting back $\sigma^2=\sigma_{\mathrm{pers}}^2(w)$ and $\tau^2=\sigma_{\mathrm{pers}}^2(w')$
gives~\eqref{eq:hetero_gaussian_kl_closed_form}.

\end{proof}

\newpage

\section{Experiment: Discriminability Calibration for the Persona-Based Ad Benchmark}
\label{app:exp-textbo}

This experiment provides a proof-of-concept discriminability audit for persona-simulation evaluation environment
used in demonstrating the effectiveness of \textsc{TextBO} \cite{kang2025bayesian}, a prompt optimization based self-improving AI method. Specifically, we focus on the ad optimization experimentation proposed in this paper. 

In the ad optimization experiment, \textsc{TextBO} iteratively improves the \textit{prompt for ad image generation} that is fed to an image model to produce the ad creative, using evaluation feedback to decide which prompt edits to keep. Concretely, it runs an automated prompt improvement loop: \emph{write a prompt} \(\rightarrow\) \emph{generate an ad image} \(\rightarrow\) \emph{evaluate it on a target audience} \(\rightarrow\) \emph{reflection} \(\rightarrow\) \emph{edit the prompt and repeat}.

As in \citet{kang2025bayesian}, we consider eight synthetic ad campaign scenarios that cover a diverse range of products across distinct categories, each defined by a creative brief that outlines the strategic and creative direction (see Web Appendix C.1 of \citet{kang2025bayesian} for the full creative briefs):
\begin{itemize}[leftmargin=0.5cm,itemsep=0pt]
    \item Scenario 1: ``GreenBite,'' a new plant-based burger patty.
    \item Scenario 2: ``AuraSonics X1,'' high-end, noise-canceling wireless earbuds.
    \item Scenario 3: ``Odyssey E-SUV,'' a new all-electric family SUV.
    \item Scenario 4: ``Oasis Eco-Lodge,'' a secluded, luxury resort with beautiful natural surroundings.
    \item Scenario 5: ``Momentum,'' a mobile-first banking app for freelancers and the gig economy.
    \item Scenario 6: ``MindGarden,'' a subscription-based meditation and mindfulness app.
    \item Scenario 7: ``Aeterno,'' a classic, automatic Swiss-made wristwatch with a heritage design.
    \item Scenario 8: ``SyncFlow,'' a project management and collaboration software platform for remote teams.
\end{itemize}

For evaluation, \textsc{TextBO} utilizes Twin-2k-500 persona dataset \cite{toubia2025database}. 
For a fixed scenario and prompt, TextBO evaluates an ad image by:
(i) sampling 200 personas from the Twin-2k-500 \emph{training} split,
(ii) conditioning a multimodal LLM judge on each persona’s survey answers, requesting an effectiveness rating on the 1--5 scale,
(iii) converting the judge’s log-probabilities over \{1,2,3,4,5\} into a real-valued expected score per persona,
and (iv) averaging over the 200 personas to produce a single scalar score.
This single scalar is the only feedback used by the optimization method. We use Gemini 2.5 Flash with the same meta-prompt used in \citet{kang2025bayesian}, which is provided in Figure \ref{fig:persona_prompt}.

\begin{figure}[!ht]
\centering
\begin{tcolorbox}[
    colback=promptbg!100!white,
    sharp corners=south, 
    boxrule=0.25mm, 
    fonttitle=\bfseries, 
    title={\textbf{Persona prompt for simulating ad effectiveness.}}, 
    width=\textwidth, 
    enhanced,
    drop shadow
    ]
\scriptsize
\setlength{\parskip}{0pt}
\setlength{\itemsep}{0pt}

SYSTEM: $\{$
You are an AI assistant. Your task is to answer the TASK as if you are the individual described in the `Persona Profile' (which contains their past survey responses). Remain consistent with the persona's past survey responses and stated characteristics. Carefully follow any instructions provided for the new question, including formatting requirements.
\}

PERSONA DATA: $\{$
\\
Which part of the United States do you currently live in?
\\
Question Type: Single Choice
\\
Options:
\\
1 - Northeast (PA, NY, NJ, RI, CT, MA, VT, NH, ME)
\\
2 - Midwest (ND, SD, NE, KS, MN, IA, MO, WI, IL, MI, IN, OH)
\\
3 - South (TX, OK, AR, LA, KY, TN, MS, AL, WV, DC, MD, DE, VA, NC, SC, GA, FL)
\\
4 - West (WA, OR, ID, MT, WY, CA, NV, UT, CO, AZ, NM)
\\
5 - Pacific (HI, AK)
\\
Answer: 2 - Midwest (ND, SD, NE, KS, MN, IA, MO, WI, IL, MI, IN, OH)
\\
What is the highest level of schooling or degree that you have completed?
\\
Question Type: Single Choice
\\
Options:
\\
1 - Less than high school
\\
2 - High school graduate
\\
3 - Some college, no degree
\\
4 - Associate's degree
\\
5 - College graduate/some postgrad
\\
6 - Postgraduate
\\
Answer: 3 - Some college, no degree
\\
$\ldots$ (Many other survey questions and answers) $\ldots$
\\
Suppose you were given \$5 and had to offer to another (anonymous) person a way to split the money. The other person can either accept or reject your offer. If the
other person accepts your offer, you would each receive the amount you proposed. If the other person rejects your offer, you would both receive 
\$0. How much would you
 offer to the other person?
\\
Question Type: Single Choice
\\
Options:
\\
1 - \$0
\\
2 - \$1
\\
3 - \$2
\\
4 - \$3
\\
5 - \$4
\\
6 - \$5
\\
Answer: 3 - \$2
\\
$\ldots$ (Many other survey questions and answers) $\ldots$
\\
$\}$
\medskip
\\
AD IMAGE: [image]
\medskip
\\
TASK:

Return only one item from ["1","2","3","4","5"] for ad effectiveness.\\
Effective Score Scale Definition:\\
1: Extremely Unlikely. The persona would actively ignore or be annoyed by this ad.\\
2: Unlikely. The persona would likely scroll past without a second thought.
\\
3: Mediocre. It is hard to decide whether the personal would click or don't click.\\
4: Likely. The persona is intrigued and has a good chance of clicking to learn more.\\
5: Extremely Likely. The persona is the ideal target; a click is almost certain.\\
No explanation. Just the score.
\end{tcolorbox}

\caption{Meta-prompt for simulating the effectiveness of a given ad-persona combination.}
\label{fig:persona_prompt}
\end{figure}

Here, in this paper's language, an artifact \(w\) corresponds to [scenario + prompt + its generated ad image]. We denote an evaluation operation of an artifact $w$, which returns a single scalar
\(o \in \mathbb{R}\), as \texttt{Eval(w)}. Repeating \texttt{Eval(w)} with fresh persona samples yields i.i.d. draws
\(o \sim Q_{\mathrm{pers}}(\cdot \mid w)\). As \textsc{TextBO} iteratively improves the prompt for ad image generation, we would like to answer the following question:
\begin{center}
    \textit{How many personas \texttt{Eval(w)} need to be confident that the improved prompt is better than the original prompt?}
\end{center}
In this problem, \(d_W(w,w')\) naturally corresponds to the number of clause-level instruction edits needed to transform prompt \(\pi\) into \(\pi'\). Therefore, we set the resolution \(r=1\), i.e., we require the benchmark to reliably distinguish prompts that differ by at least one step of prompt improvement.

There are many ways of choosing the sampling distribution of $(w, w')$ pairs. \citet{kang2025bayesian} reported results for ten steps of prompt improvement from the initial prompt; we tested ten improvement variants for each step. This totals 100 \textsc{TextBO} prompt improvement cases per scenario. 
We then estimate $\widehat{\kappa}_Q(q)$ per scenario. By Lemma \ref{lem:disc_samplecomplexity_temp0}, the predicted number of independent persona samples per prompt required to decide between two \(r\)-separated artifacts is
\[
L_{\mathrm{req}} \;=\; \left\lceil \frac{2}{\widehat{\kappa}_Q(q)}\log\frac{1}{\delta}\right\rceil.
\]
Table \ref{tab:snr_n} provides \(\widehat{\kappa}_Q(q)\) and $L_{\text{req}}$ for each scenario. Across the scenarios, $L_\text{req}$ varies from 46 to 1180, with an average of 460.25. This shows that the persona sample size choice of 200 is not too bad, but increasing it to 500 could have been a more conservative choice.

\begin{table}[ht!]
\centering
\small
\caption{$\widehat{\kappa}_Q(q)$ and Sample Size for $q=0.05$ and $\delta=0.05$}
\label{tab:snr_n}
\begin{tabular}{lccccccccc}
\toprule
 & Scenario 1 & Scenario 2 & Scenario 3 & Scenario 4 & Scenario 5 & Scenario 6 & Scenario 7 & Scenario 8 & Average\\
\midrule
$\widehat{\kappa}_Q(q)$ 
& 0.00508 
& 0.02320 
& 0.13178 
& 0.00942 
& 0.02955 
& 0.13069 
& 0.0046 
& 0.07548
& 0.051225\\

$L_{\text{req}}$ 
& 1180
& 259
& 46
& 637
& 203
& 46
& 1303
& 80
& 469.25\\
\bottomrule
\end{tabular}
\end{table}

\clearpage

\section{Further discussions}

\subsection{Further discussions on contribution}

\paragraph{Is AO+MB equivalence close to tautological?} The paper’s main contribution is not that the factorization exists once you assume it, but that the “drop-in substitute” claim can be identified exactly, and only, from two protocol-level benchmark-hygiene conditions that are concrete and auditable: what information is exposed to methods (AO) and whether the evaluator is provenance/identity independent (MB). The equivalence is valuable precisely because it collapses an otherwise ambiguous debate (“do personas ‘match’ humans?”) into two checkable failure modes that are independent in practice and routinely violated in real evaluation pipelines. In particular, MB is not a cosmetic assumption: it formalizes that the benchmark is even a well-defined environment, and the paper explicitly discusses why MB is fragile in human/LLM-judge settings unless enforced by design.

Moreover, the result is not “MB alone implies JPC.” The necessity of AO is nontrivial: even if two pipelines induce the same aggregate score distribution, leaking micro-level information can allow adaptive methods to distinguish and exploit differences. The paper includes an explicit counterexample demonstrating that violating AO can break “just panel change” even when the aggregate kernels match (Proposition \ref{prop:AO_counterexample} in Appendix \ref{ssec:AO_counterexample}). That is exactly the kind of conceptual point that is easy to miss without a formal interface-level definition and a necessity proof.

\subsection{Further discussions on extensions}

\paragraph{Would more empirical validation strengthen the arguments in this paper?} The central claim in this paper is an identification theorem about benchmark interfaces, not an empirical claim that a particular persona pipeline tracks humans. The paper includes a proof-of-concept $\mathrm{\kappa Q}$ calibration (Appendix \ref{app:exp-textbo}) to demonstrate the intended workflow: $\mathrm{\kappa Q}$ is designed to be measured and to turn "persona quality" into a budget/design question. Requiring extensive multi-domain experimentation is therefore more about strengthening the paper's applied guidance than about validating the correctness of the main theorem.

\paragraph{Can sample complexity easily be extended beyond pairwise?}
Pairwise comparison is the primitive building block, and this is easily extended beyond pairwise comparison. For example,  If if the workflow is "compare $K$ fixed artifacts," pairwise error control can be achieved with a union bound by allocating per-comparison failure probability $\delta^{\prime}=\delta /C(K,2)$) (or $\delta / \mathrm{K}$ for tournament-style brackets), which yields only a log-factor change in required samples.

\paragraph{}

\end{document}